%% file: example_paper.tex
\documentclass{article}

\usepackage{microtype}
\usepackage{graphicx}
\usepackage{subcaption}
\usepackage{booktabs} % for professional tables

\usepackage{hyperref}

\usepackage[accepted]{icml2026}

\usepackage{amsmath}
\usepackage{amssymb}
\usepackage{mathtools}
\usepackage{amsthm}
\usepackage{listings}
\usepackage{threeparttable}
\usepackage{subcaption}
\usepackage{tikz}
\usepackage{arydshln}
\usetikzlibrary{tikzmark,decorations.pathreplacing}
\usetikzlibrary{calc}
\usepackage{enumitem}
\usepackage{algorithm}
\usepackage{algorithmic}
\usepackage{booktabs}
\usepackage{threeparttable}
\usepackage{float}
\usepackage{pgfplots}
\pgfplotsset{compat=1.18}
\usepackage{pgfplotstable}
\pgfplotsset{compat=1.18}
\usepackage{subcaption}
\usepackage{tikz}
\usepackage{arydshln}
\usetikzlibrary{tikzmark,decorations.pathreplacing}
\usetikzlibrary{calc}
\usepackage{enumitem}
\usepackage{newfloat}
\usepackage{listings}
\usepackage[capitalize,noabbrev]{cleveref}

\theoremstyle{plain}
\newtheorem{theorem}{Theorem}[section]

\theoremstyle{definition}

\theoremstyle{remark}

\usepackage[textsize=tiny]{todonotes}

\icmltitlerunning{Submission and Formatting Instructions for ICML 2026}

\begin{document}

\twocolumn[
 % \icmltitle{OneStep-GNF: Direct One-Step Graph Neural Flows for  Interaction\\ Modeling in Irregular Time Series Classification}
  % \icmltitle{Removing Iteration in Interactions: One-Step Graph-Structured Neural Flows for  Irregular Time Series Classification}
    \icmltitle{One-Step Graph-Structured Neural Flows for 
    Irregular Multivariate \\
    Time Series Classification}

  % It is OKAY to include author information, even for blind submissions: the
  % style file will automatically remove it for you unless you've provided
  % the [accepted] option to the icml2026 package.

  % List of affiliations: The first argument should be a (short) identifier you
  % will use later to specify author affiliations Academic affiliations
  % should list Department, University, City, Region, Country Industry
  % affiliations should list Company, City, Region, Country

  % You can specify symbols, otherwise they are numbered in order. Ideally, you
  % should not use this facility. Affiliations will be numbered in order of
  % appearance and this is the preferred way.
  \icmlsetsymbol{equal}{*}

  \begin{icmlauthorlist}
    \icmlauthor{Mengzhou Gao}{yyy}
    \icmlauthor{Kaiwei Wang}{yyy}
    \icmlauthor{Pengfei Jiao}{yyy}

    %\icmlauthor{}{sch}
    %\icmlauthor{}{sch}
    %\icmlauthor{}{sch}
  \end{icmlauthorlist}

  \icmlaffiliation{yyy}{School of Cyberspace, Hangzhou Dianzi University}
  % \icmlaffiliation{comp}{Company Name, Location, Country}
  % \icmlaffiliation{sch}{School of ZZZ, Institute of WWW, Location, Country}

  \icmlcorrespondingauthor{Pengfei Jiao}{pjiao@hdu.edu.cn}
  % \icmlcorrespondingauthor{Firstname2 Lastname2}{first2.last2@www.uk}

  % You may provide any keywords that you find helpful for describing your
  % paper; these are used to populate the "keywords" metadata in the PDF but
  % will not be shown in the document
  \icmlkeywords{Machine Learning, ICML}

  \vskip 0.3in
]

% this must go after the closing bracket ] following \twocolumn[ ...

% This command actually creates the footnote in the first column listing the
% affiliations and the copyright notice. The command takes one argument, which
% is text to display at the start of the footnote. The \icmlEqualContribution
% command is standard text for equal contribution. Remove it (just {}) if you
% do not need this facility.

% Use ONE of the following lines. DO NOT remove the command.
% If you have no special notice, KEEP empty braces:
\printAffiliationsAndNotice{}  % no special notice (required even if empty)
% Or, if applicable, use the standard equal contribution text:
% \printAffiliationsAndNotice{\icmlEqualContribution}

\begin{abstract}
Neural Flows efficiently model irregular multivariate time series by directly learning ODE solution trajectories with neural networks, bypassing step-by-step numerical solvers. Despite their efficiency, many existing approaches treat variables independently, leaving inter-variable interactions underexplored. Moreover, their one-step mapping makes interaction modeling inherently challenging, as it removes the iterative refinement of interactions during learning. To address this challenge, we propose one-step Graph-Structured Neural Flows (GSNF), which introduce two auxiliary-trajectory self-supervision strategies to strengthen interaction learning: (i) interaction-aware trajectory generation via re-initialization, which induces trajectory divergence to expose graph-induced interactions, with a theoretically derived lower bound on divergence; and (ii) reverse-time trajectory generation, which enforces forward–backward consistency to regularize graph learning, enabled by flow invertibility. Experiments on five real-world datasets show that GSNF achieves state-of-the-art classification performance with highly competitive training time and memory usage.

\end{abstract}

% \begin{figure}[t]
% \centering
% \begin{subfigure}[t]{0.45\columnwidth}
%     \centering
%     \includegraphics[width=\linewidth]{figure/toy1.png}
%     \caption{without interaction}
%     \label{fig:graph_with_repulsion}
% \end{subfigure}
% \hfill
% \begin{subfigure}[t]{0.45\columnwidth}
%     \centering
%     \includegraphics[width=\linewidth]{figure/toy2.png}
%     \caption{with interaction}
%     \label{fig:graph_without_repulsion}
% \end{subfigure}
% \caption{The sensitivity of flow dynamics to initial conditions re-
% sults in trajectory divergence that reveals latent interactions.}
% \label{fig:graph_comparison}
% \end{figure}

\section{Introduction}
\begin{figure}[t]
\centering
\includegraphics[width=1\linewidth,]{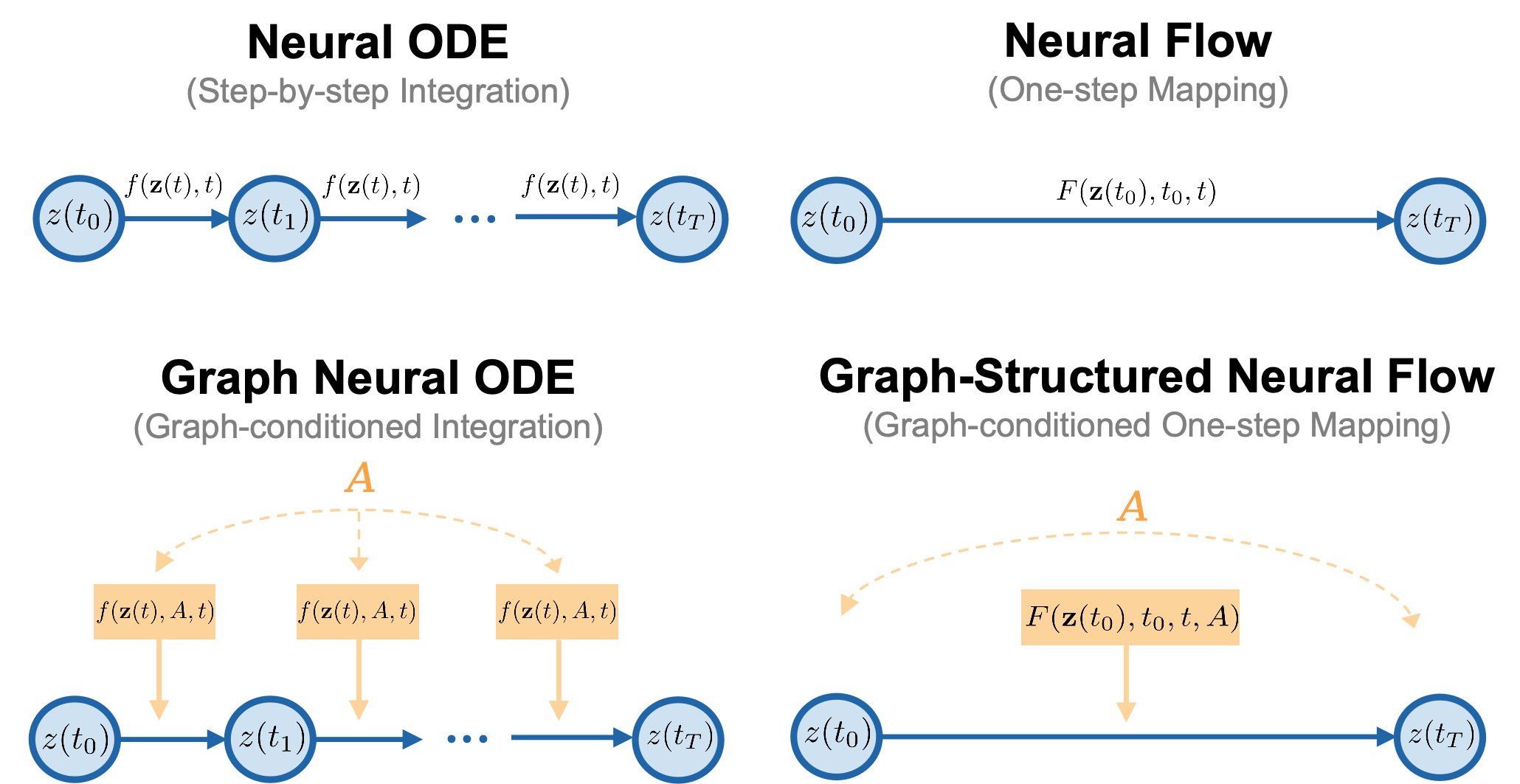}
\caption{Comparison of continuous-time models. Neural ODEs evolve latent states via step-by-step numerical integration, whereas Neural Flows perform one-step mappings.
Graph Neural ODEs incorporate interaction modeling  by conditioning dynamics on an interaction graph, while Graph-Structured Neural Flows integrate graph-conditioned interactions into one-step mappings.}
\label{fig:intro}
\end{figure}

Irregular multivariate time series arise from event-driven and heterogeneous data acquisition processes, leading to non-uniform sampling intervals and missing observations, which challenge conventional time series analysis methods. ~\cite{xiao2024ivp} Despite such irregularity, the underlying system dynamics are typically assumed to evolve continuously over time, making continuous-time models a natural framework for analyzing such data.~\cite{chen2023contiformer}

Existing continuous-time models for irregular multivariate time series can be broadly categorized into implicit and explicit formulations.~\cite{oh2025dualdynamics} Implicit methods, including Neural ODEs~\cite{chen2018neural}, Neural CDEs~\cite{kidger2020neural}, and Neural SDEs~\cite{kidger2021efficient}, define latent dynamics through differential equations and rely on numerical solvers to evolve system states.  While this formulation offers flexibility in handling irregular observations, the reliance on  step-by-step numerical integration  introduces  non-negligible computational and memory overhead~\cite{oh2025comprehensive}. In contrast, explicit approaches such as Neural Flows~\cite{bilovs2021neural} learn a one-step mapping that directly models ODE solution trajectories with neural networks, thereby avoiding iterative integration and enabling more efficient computation.

%IVP-VAE

While continuous-time models effectively handle irregular observations, many existing approaches treat variables independently, leaving inter-variable interactions underexplored in multivariate settings ~\cite{mercatali2024graph}.
Within the Neural ODE framework, 
this gap has been partially addressed by incorporating graph structures into continuous-time dynamics, enabling joint modeling of multiple interacting time series. For example, Graph Neural ODEs~\cite{poli2019graph} parameterize ODE vector fields with graph neural networks, introducing relational inductive biases into continuous-time dynamics.

However, incorporating graph-structured interactions into Neural Flow 
in a way that directly governs trajectory evolution remains largely unexplored. By collapsing continuous-time dynamics into a one-step mapping, 
Neural Flows remove numerical solvers and their intermediate latent states, causing interaction
effects to be applied only once rather than  iteratively refined along the trajectory. This limits opportunities for incremental correction and weakens the stabilization provided by iterative interaction updates in solver-based models. This challenge is further amplified when graph structures must be learned under missing observations, where interaction learning and trajectory prediction are jointly resolved within a one-step formulation.

To address this challenge, we propose \textbf{G}raph-\textbf{S}tructured \textbf{N}eural \textbf{F}lows (GSNF), a framework that strengthens interaction learning in Neural Flows while retaining the efficiency of one-step mappings. GSNF introduces auxiliary trajectory-level self-supervision to compensate for the absence of iterative interaction refinement. These auxiliary signals encourage interaction-induced trajectory divergence  and forward–backward temporal consistency, and can be incorporated into a VAE-based framework with parallel computation, enabling efficient end-to-end optimization of latent states and graph structures. Our main contributions are summarized as follows:

%through two auxiliary-trajectory self-supervision strategies applied during the generation stage. Specifically, GSNF introduces interaction-aware trajectory generation (ITG), which induces controlled trajectory divergence via re-initialization to expose latent interactions, for which we theoretically establish a lower bound on the induced divergence.   In addition, GSNF leverages the invertibility of Neural Flows to perform reverse-time trajectory generation (RTG), enforcing forward–backward consistency as a complementary regularization signal.  Both self-supervision strategies are seamlessly integrated into a VAE-based framework with parallel computation, enabling efficient end-to-end optimization of latent states and graph structures. Our main contributions are summarized as follows:
\begin{itemize}
%\item We propose GSNF for interaction-aware modeling of irregular time series, integrating graph-structured interactions into a one-step Neural Flow framework while preserving efficiency and improving accuracy.
\item  We propose GSNF, a graph-structured Neural Flow that intrinsically embeds inter-variable interactions within a one-step formulation.

\item We design auxiliary trajectory-level self-supervision for GSNF, including interaction-aware trajectory generation (ITG) and reverse-time trajectory generation (RTG), with a theoretical lower bound on the divergence induced by ITG.
%\item We introduce auxiliary trajectory-level self-supervision in GSNF, including interaction-aware trajectory generation and reverse-time trajectory generation. A theoretically derived lower bound is established on the divergence induced by interaction-aware trajectories.
\item Experiments on five real-world datasets show that GSNF achieves state-of-the-art performance and is the most efficient among top-performing methods.
\end{itemize}

\begin{figure*}[t]
\centering
\hspace*{20pt}
\includegraphics[width=1\textwidth]{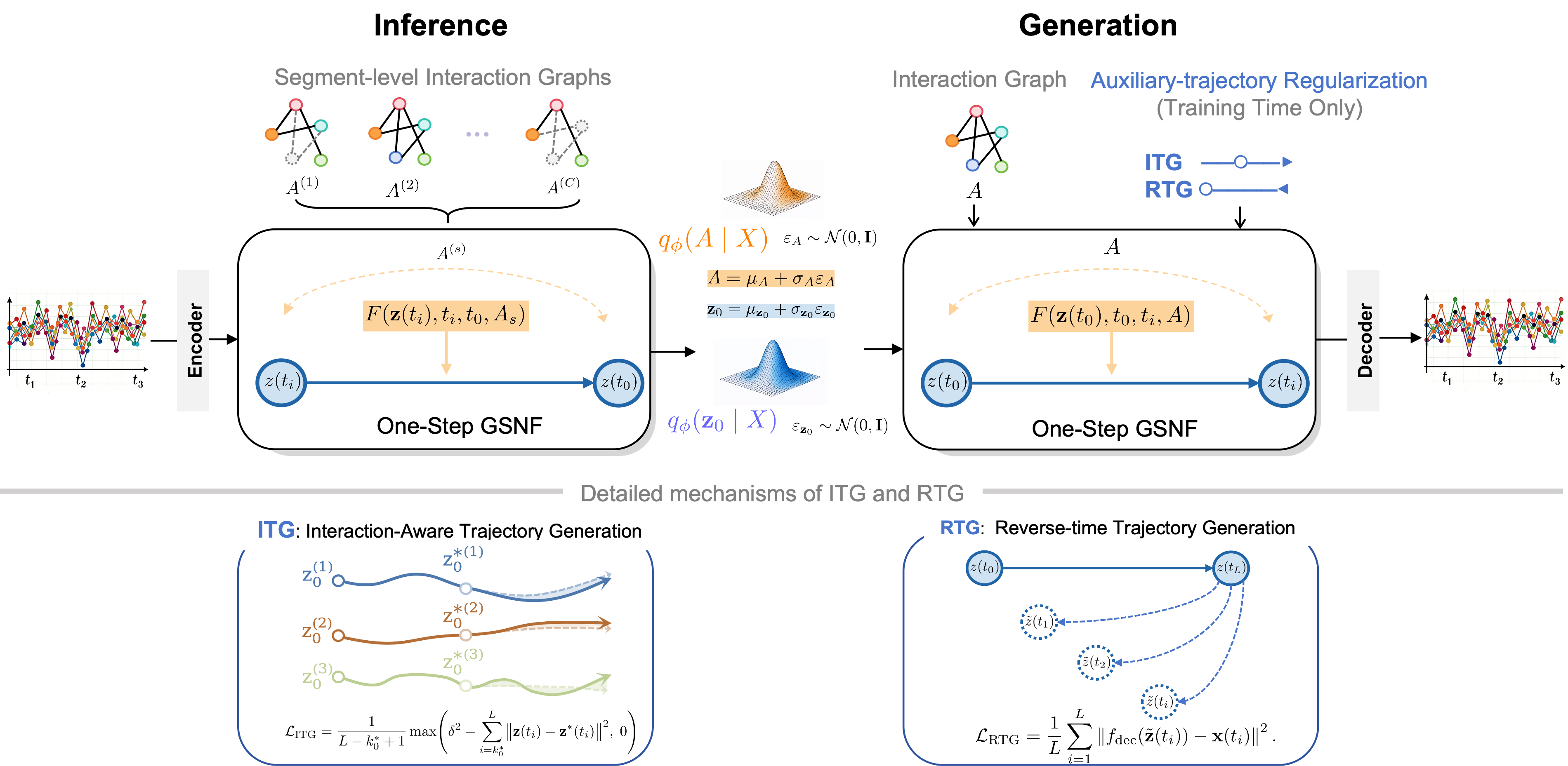} % Reduce the figure size so that it is slightly narrower than the column.
\caption{Overall framework.  \textbf{Top}:Inference and generation with one-step GSNF. \emph{During inference}, segment-level interaction graphs $\{A^{(s)}\}_{s=1}^C$ are aggregated into a posterior $q_\phi(A\mid X)$, and latent states are propagated backward through GSNF, conditioned on their corresponding segment-level graphs to infer $q_\phi(\mathbf{z}_0\mid X)$.   \emph{During generation}, samples of $A$ and $\mathbf{z}_0$ are used by GSNF to compute latent states forward at arbitrary time points via one-step mappings, with auxiliary trajectory-level regularization applied during training. \textbf{Bottom}: Detailed mechanisms of ITG and RTG. \emph{ITG} induces divergence between the original trajectory $\mathbf z(t)$ and the re-initialized trajectory $\mathbf z^\ast(t)$ via intermediate-time re-initialization, regularized by the trajectory-level loss $\mathcal L_{\mathrm{ITG}}$.
\emph{RTG} enforces forward–backward consistency between the forward trajectory $\mathbf z(t)$ and the reverse-time trajectory $\tilde{\mathbf z}(t)$ through the reconstruction loss $\mathcal L_{\mathrm{RTG}}$.
}
\label{fig:framework}
\end{figure*}

\section{Related work}
\subsection{Continuous-Time Models for Irregular Time Series}
Modeling irregularly sampled time series has long been a challenging problem, primarily due to non-uniform observation intervals and asynchronous measurements. Neural Ordinary Differential Equations (Neural ODEs)~\cite{chen2018neural} first introduced a continuous-depth framework for modeling such data. Latent ODE and GRU-ODE-Bayes~\cite{de2019gru} extended Neural ODEs by introducing discontinuities at observation points, thereby improving predictive accuracy on irregularly sampled sequences. However, these methods rely on numerical solvers, which limits computational efficiency.

Neural Flows~\cite{bilovs2021neural} addressed this limitation by directly parameterizing solution trajectories with neural networks, removing the dependency on ODE solvers and enabling one-step trajectory computation. Building on this idea, IVP-VAE~\cite{xiao2024ivp} employed a single invertible initial value problem to achieve parallel continuous-time modeling, offering better scalability and efficiency compared to RNN-based ODE methods. DualDynamics~\cite{oh2025dualdynamics} further combined Neural Differential Equation (NDE)-based solvers with Neural Flows to enhance the expressive power of continuous models. Despite these advances, existing methods lack explicit modeling of inter-variable dependencies that drive trajectory evolution.

\subsection{Graph Learning in Continuous-Time Models}
In multivariate time series, the complex interdependencies among variables have motivated the integration of graph structures into continuous-time modeling frameworks. LG-ODE~\cite{huang2020learning} was one of the earliest attempts to incorporate graph neural networks (GNNs) into Neural ODEs, enabling neighborhood information aggregation to improve variable modeling. Coupled Graph ODE~\cite{huang2021coupled} extended this idea to edge-level dynamics, while GG-ODE~\cite{huang2023generalizing} further generalized it to multi-environment settings. However, these methods rely on multi-step numerical integration to propagate information across the graph, resulting in high computational costs and incompatibility with Neural Flow models that emphasize efficient one-step computation.

Alternative approaches have attempted to infer graph structures directly from data. For instance, Raindrop~\cite{zhang2021graph} learns a dependency graph by averaging time-varying attention weights across timestamps and pruning weak connections. Similarly, GNeuralFlow~\cite{mercatali2024graph} employs a directed acyclic graph (DAG) to represent conditional dependencies among variables and jointly learns this structure alongside Neural Flows, offering flexibility for modeling unknown interaction graphs. Notably, the graph in GNeuralFlow is only used to parameterize the initial condition, and does not directly participate in the subsequent flow-based trajectory evolution.

\section{Preliminaries}

\subsection{Neural Flows} 

% \paragraph{Neural Ordinary Differential Equations.} Neural ODEs\cite{chen2018neural} model the system in Eq.~\eqref{eq:autonomous_system} governed by a parameterized vector field
% \begin{equation}
%     \dot{\mathbf{z}}(t)=f(\mathbf{z}(t), t; \theta), 
% \end{equation}
% and compute trajectories via numerical ODE solvers.

% %Time Dependence in Non-Autonomous Neural ODEs
% The corresponding flow operator defined by,
% \begin{equation}
%     \phi_t(\mathbf{z}_0; \theta)=\mathbf{z}_t(\theta),
% \end{equation}
% is a parametric map from $\mathbb{R}^n\mapsto\mathbb{R}^n$.

Neural Flows~\cite{bilovs2021neural} directly model the solution trajectory of an ODE  with a neural network,
%Neural Flows~\cite{bilovs2021neural} directly model the solution trajectory of the system in Eq.~\eqref{eq:autonomous_system}  with a neural network,
\begin{equation}
   \label{eq:neuralflow}
    \mathbf{z}(t)=F(\mathbf{z}_0, t_0, t), \quad \mathbf{z}(t_0)=\mathbf{z}_0,
\end{equation}
instead of parameterizing the derivative as in Neural ODEs~\cite{chen2018neural}.  This eliminates the need for numerical ODE solvers, providing a faster alternative to the ODE.  To guarantee correspondence with an underlying ODE, $F$ must (i) satisfy the initial condition $F(\mathbf{z}_0, t_0, t_0 )=\mathbf{z}_0$, anchoring the trajectory to the specified starting point; and (ii) be invertible in $\mathbf{z}$  for all $t$, ensuring uniqueness of trajectories without self-intersections.

\subsection{Problem Formulation}
Consider a labeled multivariate time series $(X, y)$, where $y$ denotes the categorical label and $X=\{(\mathbf{x}_k, t_k, \mathbf{m}_k)\}_{k=1}^{L}$ is a sequence of $L$ irregularly sampled observations. Each observation $\mathbf{x}_k\in\mathbb{R}^{D_x}$ is recorded at non-uniform time points $t_k$ and is associated with a binary mask $\mathbf{m}_k\in\{0,1\}^{D_x}$, where $m_{k, i}=1$ denotes that the $i$-th variable is observed at time $t_k$ and $m_{k, i}=0$ indicates a missing value.  The dataset is given by $\mathcal{X}=\{(X_n, y_n)\}_{n=1}^N$, consisting of $N$ such sequences.

We adopt a latent continuous-time modeling framework in which the observed
sequence $X$ is assumed to arise from interacting latent dynamics
$\mathbf{z}(t)\in\mathbb{R}^{D_z}$ governed by an interaction structure $A$.  Our goal is to jointly infer the latent initial state $\mathbf{z}_0$ and the
interaction structure $A$, and to integrate them into the Neural Flow for
downstream classification.
Specifically, we aim to  learn a classifier $f: X \rightarrow \mathcal{Y}$ such that
the predicted label
\[
\hat{y} = f\bigl(\mathbf{z}_0, A, F(\mathbf{z}_0, A, t)\bigr)
\]
matches the ground-truth label $y$ as accurately as possible.

\section {Graph-Structured Neural Flows} 
To model continuous-time dynamics with explicit interaction structure, we propose \textit{Graph-Structured Neural Flows} (GSNF) as the core dynamical module of our framework.  Unlike standard neural flows~\cite{bilovs2021neural} that evolve latent states independently,
GSNF conditions the flow on a interaction graph $A$, allowing interactions to influence the system evolution throughout the flow.

Formally, GSNF defines a Neural Flow that maps an initial latent state at time $t_0$
to a future state at time $t$,
\begin{equation}
\mathbf{z}(t) = F(\mathbf{z}(t_0), t_0, t, A).
\label{eq:flow_def}
\end{equation}

In this work, we implement GSNF following the ResNet flow architecture~\cite{bilovs2021neural}.
Unlike GNeuralFlow~\cite{mercatali2024graph}, which incorporates the graph only to initialize the latent state,
GSNF embeds graph-structured interactions directly into the flow dynamics, allowing interactions to
modulate state evolution throughout the trajectory.
Specifically, the flow mapping is defined as
\begin{equation}
\begin{aligned}
F(\mathbf{z}(t_0), t_0, t, A)
&=
\mathbf{z}(t_0)
+
\varphi(t - t_0)\, g(\mathbf{z}(t_0), t_0, t, A), \\
g(\mathbf{z}(t_0), t_0, t, A)
&=
\mathrm{MLP}(\mathbf{z}(t_0)||t|| t_0)\\
&\qquad\odot\;
\mathrm{GCN}(A, \mathbf{z}(t_0)||t||t_0),
\end{aligned}
\label{eq:gsnf}
\end{equation}
where $\varphi(\cdot)$ satisfies $\varphi(0)=0$, ensuring the initial condition $F(\mathbf{z}(t_0), t_0, t_0, A)=\mathbf{z}(t_0)$, and is instantiated with a
TimeFourier encoding. The operator $\odot$ denotes the element-wise product, and $||$denotes row-wise concatenation. 

A key property of GSNF is its invertibility, which enables well-defined forward and reverse mappings. We provide sufficient conditions in the following theorem.

\begin{theorem}[Invertibility of Graph-Structured Neural Flows]
\label{thm:inveritbility}
The GSNF defined in \eqref{eq:gsnf} is invertible if $\varphi(\cdot)\in[0,1)$ and $g(\cdot, t_0, t, A)$ is a contractive mapping, which can be ensured
by applying spectral normalization to all linear layers in both the MLP and the GCN.
\end{theorem}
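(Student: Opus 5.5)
The plan is to follow the standard invertibility argument for ResNet flows from~\cite{bilovs2021neural}, which in turn rests on the i-ResNet argument via the Banach fixed-point theorem. Fix $t_0$, $t$ and the graph $A$, and write $h(\mathbf{z}) := \varphi(t-t_0)\, g(\mathbf{z}, t_0, t, A)$. Then \eqref{eq:gsnf} reads $F(\mathbf{z}) = \mathbf{z} + h(\mathbf{z})$. Invertibility in $\mathbf{z}$ means that for every target $\mathbf{y}$ the equation $\mathbf{z} + h(\mathbf{z}) = \mathbf{y}$ has exactly one solution. Rewrite this as the fixed-point problem $\mathbf{z} = T_{\mathbf{y}}(\mathbf{z}) := \mathbf{y} - h(\mathbf{z})$. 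If $\mathrm{Lip}(h) < 1$, then $T_{\mathbf{y}}$ is a contraction on the complete space $\mathbb{R}^{D_z}$. Banach's theorem then gives a unique fixed point, so $F$ is bijective. It also yields a practical inverse by iterating $\mathbf{z}^{(k+1)} = \mathbf{y} - h(\mathbf{z}^{(k)})$, which is what the reverse-time generation will use.

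The Lipschitz bound itself is a one-line estimate:
\begin{equation*}
\mathrm{Lip}(h) \le |\varphi(t-t_0)|\, \mathrm{Lip}\bigl(g(\cdot, t_0, t, A)\bigr) < 1 .
\end{equation*}
This holds because $\varphi(t-t_0) \in [0,1)$ and $g$ is contractive, i.e.\ $\mathrm{Lip}(g) \le 1$; strictly, either factor below one suffices. Injectivity can also be shown directly: if $F(\mathbf{z}_1) = F(\mathbf{z}_2)$, then $\|\mathbf{z}_1 - \mathbf{z}_2\| = \|h(\mathbf{z}_1) - h(\mathbf{z}_2)\| < \|\mathbf{z}_1 - \mathbf{z}_2\|$ unless $\mathbf{z}_1 = \mathbf{z}_2$.

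The main obstacle is the last clause of the statement: that spectral normalization makes $g$ contractive. The difficulty is that $g$ is the \emph{element-wise product} of two networks, not a composition, and a product of two $1$-Lipschitz maps need not be $1$-Lipschitz. I would argue it via the product rule on the Jacobian. Write $a(\mathbf{z}) = \mathrm{MLP}(\cdot)$ and $b(\mathbf{z}) = \mathrm{GCN}(A, \cdot)$. Then
\begin{equation*}
J_g = \mathrm{diag}(b)\, J_a + \mathrm{diag}(a)\, J_b ,
\end{equation*}
so
\begin{equation*}
\mathrm{Lip}(g) \le \|b\|_\infty \mathrm{Lip}(a) + \|a\|_\infty \mathrm{Lip}(b).
\end{equation*}
Spectral normalization of every linear layer, combined with $1$-Lipschitz activations, gives $\mathrm{Lip}(a) \le 1$. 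For the GCN, one additionally needs the propagation matrix (e.g.\ the normalized adjacency $\hat A = D^{-1/2}(A+I)D^{-1/2}$, whose spectral norm is at most $1$) to be non-expansive, giving $\mathrm{Lip}(b) \le 1$.

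The sup-norm factors are the remaining issue. They are controlled if the output activations are bounded, e.g.\ tanh or sigmoid outputs with range in $[-1,1]$. One then gets $\mathrm{Lip}(g) \le 2$, and the extra factor is absorbed by rescaling one branch by $1/2$ or by requiring $\varphi < 1/2$. I would state this as the precise sufficient condition behind ``spectral normalization ensures contractivity.'' Note also that the concatenated inputs $t$ and $t_0$ are constants for fixed times, so they do not affect the Lipschitz constant in $\mathbf{z}$.
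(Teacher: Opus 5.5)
Your proof is correct, but it reaches invertibility by a different route than the paper. The paper never sets up a fixed-point problem. It uses the triangle and reverse triangle inequalities to get $(1-\varphi(t-t_0)L_g)\|\mathbf z_1-\mathbf z_2\|\le\|F(\mathbf z_1)-F(\mathbf z_2)\|\le(1+\varphi(t-t_0)L_g)\|\mathbf z_1-\mathbf z_2\|$, calls $F$ bi-Lipschitz, and appeals to ``standard properties'' to conclude that $F^{-1}$ exists with Lipschitz constant at most $1/(1-\varphi(t-t_0)L_g)$. Its lower bound is a quantitative form of your direct injectivity remark. It does not by itself give surjectivity onto $\mathbb{R}^{D_z}$, which the paper leaves implicit; one would need, e.g., invariance of domain plus closedness of the image, or exactly your Banach argument. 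Your route therefore buys full bijectivity in one step plus a constructive inverse. The paper's route buys the explicit Lipschitz bound on $F^{-1}$, which you can recover from the same reverse-triangle estimate. On the spectral-normalization clause, the paper gives no argument and simply posits $L_g<1$. Your product-rule analysis goes beyond it and correctly shows that normalizing the linear layers does not control an element-wise product. With an unbounded MLP output and the linear GCN $\mathcal A\mathbf z W$ of Theorem~\ref{thm:itg_lower_bound}, $g$ can grow quadratically and need not even be globally Lipschitz, so extra conditions of the kind you list are genuinely required. Two minor caveats. First, the sampled $A$ is softmax-normalized (row-stochastic, generally asymmetric), and such matrices can have spectral norm well above $1$ (about $\sqrt{D_x}$ if every row concentrates on one node), so the symmetric-normalization bound you cite does not transfer automatically. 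Second, the paper's RTG does not invert $F$ by fixed-point iteration; it evaluates $F(\mathbf z(t_L),t_L,t_i,A)$ with reversed time arguments.
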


\section{Overall Framework}
Building upon the previously introduced GSNF, our framework (Fig.~\ref{fig:framework}) adopts a latent-variable (VAE) structure to jointly infer latent initial states $\mathbf{z}_0$ and the interaction graph $A$ from irregularly sampled multivariate time series.  The inferred states $\mathbf{z}_0$ and graph $A$ are then used to evolve latent trajectories  in \textit{one-step via GSNF}, capturing interaction-aware dynamics while enabling parallel computation and avoiding iterative updates.  To enforce trajectory consistency and integrate the interaction graph effectively, we further propose a \textit{self-supervised trajectory generation} mechanism, which leverages both re-initialization and reversible trajectory constraints (Section \ref{sec:self_supervised}). The following subsections describe each component in detail.

\subsection{Initial Latent Condition Inference}
Since exact posterior inference of the interaction graph $A$ and the initial latent state $\mathbf{z}_0$ is intractable~\cite{kingma2013auto}, we approximate them using variational distributions  $q_\phi(A\mid X)$ and $q_\phi(\mathbf{z}_0\mid X)$ as described below. During inference, missing observations are zero-imputed.

\paragraph{Posterior of the interaction graph.} 
% \begin{figure}[t]
% \centering
% \includegraphics[width=0.9\linewidth,]{figure/graph.png}
% \caption{interaction graph}
% \label{fig:graph}
% \end{figure}

To roubustly infer interaction structure under irregular missing observations, we perform graph inference over local temporal segments and aggregate segment-level posteriors into a global interaction graph. Specifically, segment-level adjacency matrices $\{A^{(s)}\}_{s=1}^C$
are introduced during inference to capture locally reliable interactions
and are marginalized into a single posterior $q_\phi(A\mid X)$,
which defines the time-invariant interaction graph used by the generative model.

\emph{(i) Segment-level  representation.} Let $\{X^{(s)}\}_{s=1}^C$ denote $C$ consecutive local temporal segments of  length $M=L/C$. We compute a segment-level representation by temporal averaging as  $\bar{\mathbf{x}}^{(s)} = \frac{1}{M} \sum_{k=1}^{M} \mathbf{x}_{(s-1)M+k}$.  

\emph{(ii) Segment-level adjacency inference.}
Variable interactions are inferred via self-attention~\cite{zhou2023detecting}, with query and key vectors $\mathbf{q}_i^{(s)} = \bar{\mathbf{x}}_i^{(s)} W^Q$ and $\mathbf{k}_i^{(s)} = \bar{\mathbf{x}}_i^{(s)} W^K$ for variable $i$, where $\bar{x}_i^{(s)} \in \mathbb{R}$ denotes the segment-level average
of variable $i$, and $W^Q, W^K \in \mathbb{R}^{1 \times D_x}$ are learnable linear projection matrices. The adjacency weight from variable $i$ to $j$ is computed as
$
a_{ij}^{(s)} = \frac{\exp(\mathbf{q}_i^{(s)} (\mathbf{k}_j^{(s)})^\top / \sqrt{D_x})}{\sum_{j'=1}^{D_x} \exp(\mathbf{q}_i^{(s)} (\mathbf{k}_{j'}^{(s)})^\top / \sqrt{D_x})},
$
forming the \textit{segment-level} adjacency matrix $A^{(s)} = [a_{ij}^{(s)}]$.

\emph{(iii) Posterior aggregation.} 
Each $A^{(s)}$ parameterizes a Gaussian variational component 
$
q_\phi(A^{(s)} \mid X^{(s)}) = \mathcal{N}(\mu_{A^{(s)}}, \sigma_{A^{(s)}})
$, where $\mu_{A^{(s)}} = h_A(A^{(s)})$ and $\sigma_{A^{(s)}} = \text{Softplus}(h_A(A^{(s)}))$, with $h_A(\cdot)$ denoting a shared feed-forward network. Gaussian samples are normalized via softmax to obtain valid adjacency matrices. The posterior over the full interaction graph is a weighted mixture of segment-level posteriors:

\begin{equation}
\label{eq:posterior_A}
  q_\phi(A\mid X) = \sum_{s=1}^C w_s \cdot q_\phi(A^{(s)} \mid X^{(s)}),
\end{equation}
with mixture weights 
$
   w_s = \frac{D_{KL}(q_\phi(A^{(s)} \mid X^{(s)})\,\|\,p(A))}{\sum_{j=1}^{C} D_{KL}(q_\phi(A^{(j)} \mid X^{(j)})\,\|\,p(A))},
$
where  $p(A)$ denotes a standard Gaussian prior. These weights assign higher importance to segments that induce more informative posteriors, thereby mitigating the effect of irregular missingness.

\paragraph{Posterior of the Initial Latent State.}At each time point $t_k$, the observation $\mathbf{x}_k$ and its mask $\mathbf{m}_k$ are concatenated and encoded  into a latent representation
$\mathbf{z}(t_k) = f_{\mathrm{enc}}(\mathbf{x}_k \mid \mathbf{m}_k)$. To infer the initial latent state $\mathbf{z}_0$, 
we follow the IVP-VAE paradigm~\cite{xiao2024ivp} and treat each latent state $(\mathbf{z}(t_k), t_k)$ as an initial condition of the underlying continuous-time system. Specifically, for each time point $t_k$, we identify its corresponding temporal segment indexed by $s_k=\lceil k/M\rceil$ and integrate the associated segment-conditioned adjacency matrix $A^{(s_k)}$ to propagate $\mathbf{z}(t_k)$ backward to $t_0$ via the proposed GSNF:
\begin{equation}
\{\mathbf{z}^k_0\}_{k=1}^{L}
=
\left\{
F\!\left(\mathbf{z}(t_k), t_k, t_0, A^{(s_k)} \right)
\right\}_{k=1}^{L}.
\end{equation}

The segment-conditioned adjacency matrices $\{A^{(s)}\}_{s=1}^{C}$ are introduced only during inference to improve robustness under irregular missing observations.
Each IVP-based estimate $\mathbf{z}_0^k$ parameterizes a diagonal Gaussian variational component
$q_\phi(\mathbf{z}_0^k \mid X) = \mathcal{N}(\mu_{\mathbf{z}_0^k}, \sigma_{\mathbf{z}_0^k})$, where $\mu_{\mathbf{z}_0^k} = h_z(\mathbf{z}_0^k)$ and $\sigma_{\mathbf{z}_0^k} = \text{Softplus}(h_z(\mathbf{z}_0^k))$, with $h_z(\cdot)$ a shared feed-forward network. Finally, the posterior distribution over the initial latent state $\mathbf{z}_0$ is obtained from $\{\mathbf{z}^k_0\}_{k=1}^{L}$ of the sequence $X$:
\begin{equation}
\label{eq:posterior_z}
q_\phi(\mathbf{z}_0\mid X)=\frac{1}{L}\sum_{i=1}^{L} q_\phi(\mathbf{z}_0^i\mid X).
\end{equation}

%For each time series $X_n = \{(\mathbf{x}_i, t_i)\}_{i=1}^{L_n}$ with its corresponding mask $\{m\}^{L_n}_i$ that indicate which variables are observed and which are not at time $t_i$. We  concatenated $\mathbf{x}_i$ and $\mathbf{m}_i$, and process the result through  neural networks to obtain latent states $z_i=\text{Enc}(x_i|m_i)$ at each time points.

% Then, during the inference process, we incorporate the proposed GSNF as an Initial Value Problem (IVP) solver~\cite{xiao2024ivp} within the latent sequence modeling framework. Each latent state $z_i$ at timestamp $t_i$ together with its adjacency matrix $A_i$ derived in Eq. \eqref{eq:adj} serves as the initial condition and graph structure for the flow, yielding a set of intermediate latent states $z_0^i$. 

% For the evolution of the latent state $z$, during the inference process, we use each window-specific adjacency matrix $A_i$ and latent state $z_i$ as the initial condition and graph structure for Graph Neural Flow to solve for a set of intermediate latent states $\{z_0^{i=1}\}^{L}$: 

% \begin{equation}
% \{z^i_0\}_{i=1}^{L}=\{F\left(z(t_i),t_i,t_0, A\right)\}_{i=1}^{L}.
% \end{equation}

% Finally, a mean-weighted posterior distribution is computed over $z_0$ from $\{z_0^i\}$ of the sequence $X_n$:

% \begin{equation}
% q_\phi(z_0\mid X_n)=\frac{1}{L_n}\sum_{i=1}^{L_n} q_\phi(z_0^i\mid X_n),
% \label{eq:z0}
% \end{equation}
% and sample the initial latent representation $z_0$ from this distribution.

\subsection{Latent Trajectory Generation}
\label{sec:self_supervised}
With the initial latent state $\mathbf{z}_0$ sampled from \eqref{eq:posterior_z} and the interaction graph $A$ sampled from \eqref{eq:posterior_A}, latent trajectories 
are approximated using the GSNF as

\begin{equation}
\{\mathbf{z}(t_i)\}_{i=1}^{L}=\{F\left(\mathbf{z}_0, 0,t_i,A\right)\}_{i=1}^{L}.
\label{eq: z-forward}
\end{equation}

However, one-step generation applies the learned interactions only once, offering limited supervision for refining the continuous dynamics and variable dependencies encoded in the flow and the interaction graph. 
Therefore, we introduce self-supervised regularization by generating two complementary auxiliary trajectories and enforcing trajectory-level constraints, as detailed below.

\paragraph{Interaction-Aware Trajectory Generation (ITG).}
% \begin{figure}[t]
% \centering
% \includegraphics[width=0.9\linewidth,]{figure/ITG.png}
% \caption{Interaction-Aware Trajectory Generation.}
% \label{fig:itg}
% \end{figure}

% \begin{figure}[t]
% \centering
% \includegraphics[width=0.9\linewidth,]{figure/traj_4.0.png}
% \caption{Interaction-aware trajectories generation via re-initialization. A new initial condition is sampled from the original trajectory, and the trajectory is re-evolved under the same dynamics. This process implicitly captures the interactions among variables, illustrated here with the example $z(t)=\exp\big[(I-A^\top)(t-t_0)]z(t_0)$.}
% \label{fig:traj_evolution}
% \end{figure}

Following the observations in~\cite{mercatali2024graph}, non-interacting components exhibit invariant behavior under perturbations, whereas components involved in graph-induced interactions show sensitivity to re-initialization. Building on this insight, GSNF constructs interaction-aware auxiliary trajectories by re-initializing the learned flow approximation along the original latent trajectory. 

Specifically, for a re-initialization time $t_0^* \in (t_0, t_L)$, we define the corresponding state
$
\mathbf{z}_0^* = F(\mathbf{z}(t_0), t_0, t_0^*, A)$,
and generate an auxiliary trajectory using the same parameterized GSNF, with  $\mathbf z_0^\ast$ serving as the re-initial condition:
\begin{equation}
\mathbf{z}^\ast(t)=F(\mathbf{z}^\ast_0, t^\ast_0, t, A), \quad t \ge t_0^*.
\end{equation}

The divergence between the original trajectory $\mathbf{z}(t)$ and the re-initialized auxiliary trajectory $\mathbf{z}^*(t)$ reflects the sensitivity of the learned flow
to interaction-induced perturbations. To explicitly encourage such sensitivity, we
introduce a margin-based trajectory-level regularizer:
% \begin{equation}
%     \mathcal{L}_{\text{ITG}} =\frac{1}{L} \max(\sum_{i=1}^L \|\mathbf{z}(t_i)-\mathbf{z}^\ast(t_i)\|^2,  \delta^2),
% \end{equation}

% \begin{equation}
% \mathcal{L}_{\text{ITG}}
% =
% \frac{1}{L}
% \max\!\Big(
% \delta^2
% -
% \sum_{i=1}^L \|\mathbf{z}(t_i)-\mathbf{z}^\ast(t_i)\|^2,
% \; 0
% \Big),
% \end{equation}

\begin{equation}
\label{eq:ITG}
\mathcal{L}_{\mathrm{ITG}}
=
\frac{1}{L-k_0^*+1}
\max\!\Bigg(
\delta^2
-
\sum_{i=k_0^*}^{L}
\bigl\|\mathbf z(t_i)-\mathbf z^\ast(t_i)\bigr\|^2,
\; 0
\Bigg),
\end{equation}
where the summation starts from the re-initialization index $k_0^*$ for $t_i \ge t_0^*$, and $\delta>0$ sets a separation margin.

In this work, the separation margin $\delta$ can be treated as a fixed
hyperparameter, or alternatively chosen based on a theoretical lower-bound
analysis provided in Theorem~\ref{thm:itg_lower_bound}, without manual tuning.

\begin{theorem}[A Data-Dependent Lower Bound for the ITG Separation Margin]
\label{thm:itg_lower_bound}
Assume that the graph-conditioned interaction module in
Eq.~\eqref{eq:gsnf} admits the form
$
\mathrm{GCN}(\mathbf{z}(t_0), t_0, t, A)=\mathcal A \mathbf z(t_0) W,
$
where $\mathcal A$ is the normalized adjacency matrix and $W$ is a trainable
weight matrix. Let $\mathbf z(t)$ and $\mathbf z^*(t)$ denote the original and re-initialized
latent trajectories, respectively.
Then, for all discrete time points $t_i>t_0^*$, the cumulative trajectory
divergence satisfies
\begin{equation}
\label{eq:lower_bound}
\sum_{i=k_0^*}^{L}\|\mathbf z^*(t_i)-\mathbf z(t_i)\|
\;\ge\;
\max\!\Bigl\{0,\,(L-k_0^*+1)\bigl(\eta-\Delta_{\mathrm{in}}\bigr)\Bigr\},
\end{equation}
where 
$\Delta_{\mathrm{in}}=\|\mathbf z_0^*-\mathbf z_0\|, ~
\eta
=
\sigma_{\min}(\mathcal A)\,
\sigma_{\min}(W)\,
\Delta_{\mathrm{in}}
,$
and $k_0^*$ denotes the index of the first time point $t_i \ge t_0^*$, $\sigma_{\min}(\cdot)$ denotes the smallest singular value of a matrix.

\end{theorem}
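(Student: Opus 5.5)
The plan is to reduce everything to a per-time-point bound and then sum. Fix any $t_i \ge t_0^*$. Write both trajectories through the GSNF map \eqref{eq:gsnf}:
\[
\mathbf z(t_i)=\mathbf z_0+\varphi(t_i-t_0)\,g(\mathbf z_0,t_0,t_i,A),\qquad
\mathbf z^*(t_i)=\mathbf z_0^*+\varphi(t_i-t_0^*)\,g(\mathbf z_0^*,t_0^*,t_i,A).
\]
Here $\mathbf z_0=\mathbf z(t_0)$ and $\mathbf z_0^*=F(\mathbf z_0,t_0,t_0^*,A)$. Subtracting gives a decomposition
\[
\mathbf z^*(t_i)-\mathbf z(t_i)=(\mathbf z_0^*-\mathbf z_0)+\Gamma_i,
\]
where $\Gamma_i$ collects the difference of the two flow increments. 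The reverse triangle inequality then yields
\[
\|\mathbf z^*(t_i)-\mathbf z(t_i)\|\ge \|\Gamma_i\|-\Delta_{\mathrm{in}}.
\]
So the goal becomes showing $\|\Gamma_i\|\ge\eta$ for each $i\ge k_0^*$. Once that holds, summing over the $L-k_0^*+1$ indices gives $(L-k_0^*+1)(\eta-\Delta_{\mathrm{in}})$. Combining with the trivial bound $\ge 0$ gives the stated $\max\{0,\cdot\}$.

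To lower-bound $\Gamma_i$, I would use the assumed linear GCN form $\mathcal A\mathbf z W$. The interaction-driven part of the increment difference is then
\[
\mathcal A(\mathbf z_0^*-\mathbf z_0)W,
\]
modulated by the MLP gate and the time factor $\varphi$. The key inequality is
\[
\|\mathcal A X W\|\ge\sigma_{\min}(\mathcal A)\,\|X\|\,\sigma_{\min}(W).
\]
I would prove it by applying $\|\mathcal A Y\|\ge\sigma_{\min}(\mathcal A)\|Y\|$ and $\|YW\|\ge\sigma_{\min}(W)\|Y\|$ in turn, both valid in Frobenius norm. With $X=\mathbf z_0^*-\mathbf z_0$ this gives $\|\mathcal A X W\|\ge\sigma_{\min}(\mathcal A)\sigma_{\min}(W)\Delta_{\mathrm{in}}=\eta$. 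This is exactly the quantity $\eta$ in the statement, so the remaining task is to show that $\Gamma_i$ is dominated from below by this graph term.

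The main obstacle is isolating that graph term inside $\Gamma_i$. The two increments differ not only in their initial state but also in the time arguments ($t_0$ vs.\ $t_0^*$), in the factor $\varphi(t_i-t_0)$ vs.\ $\varphi(t_i-t_0^*)$, and in the multiplicative MLP gate. My first attempt would adopt the simplification the theorem implicitly suggests. I would treat the GCN term as the part of $g$ that is linear in the state, absorb the time-dependent scalars and the gate into a normalization (taking them as at least one in magnitude, or as folded into $W$), and drop the time-concatenation channels. Then $\Gamma_i$ reduces to $\mathcal A(\mathbf z_0^*-\mathbf z_0)W$ and the bound above applies directly.

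If the gate cannot be absorbed, I would instead state the argument under the additional assumption that the MLP factor is bounded below entrywise by $1$ in absolute value, so the element-wise product cannot shrink the norm. I would then remark that the bound is otherwise scaled by that gate constant. Either way, the per-step estimate plus summation gives \eqref{eq:lower_bound}.
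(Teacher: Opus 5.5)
Your skeleton (split $\mathbf z^*(t_i)-\mathbf z(t_i)=(\mathbf z_0^*-\mathbf z_0)+\Gamma_i$, apply the reverse triangle inequality, show $\|\Gamma_i\|\ge\eta$, sum, take the maximum with $0$) is exactly the route of the paper's proof. Your inequality $\|\mathcal A XW\|\ge\sigma_{\min}(\mathcal A)\sigma_{\min}(W)\|X\|$ is correct for square $\mathcal A,W$. The gap is the step you flag yourself, and your patches do not close it. Even with the gate set to $1$ and the time channels dropped,
\[
\Gamma_i=\varphi(t_i-t_0^*)\,\mathcal A(\mathbf z_0^*-\mathbf z_0)W+\bigl(\varphi(t_i-t_0^*)-\varphi(t_i-t_0)\bigr)\,\mathcal A\mathbf z_0 W .
\]
The second term is not controlled by $\Delta_{\mathrm{in}}$ and can cancel the first. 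Two different time weights cannot be absorbed into one normalization or into the shared $W$. Requiring $\varphi\ge1$ contradicts $\varphi(0)=0$ and the range $\varphi\in[0,1)$ needed for invertibility. An entrywise bound $|\mathrm{MLP}|\ge1$ does not help either, because the two terms are gated by the MLP at different inputs and can still cancel.

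In fact your per-step target is false. The gate depends on $t$, so it can vanish at the two inputs used at $t_i$ while being nonzero at the input that produces $\mathbf z_0^*$; this gives $\Gamma_i=0<\eta$ for nonsingular $\mathcal A,W$. Also, at $t_i=t_0^*$ one has $\mathbf z^*(t_0^*)=F(\mathbf z_0^*,t_0^*,t_0^*,A)=\mathbf z_0^*=\mathbf z(t_0^*)$. So the divergence there is exactly $0$ and cannot exceed $\eta-\Delta_{\mathrm{in}}$ whenever that quantity is positive.

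This is not a technicality. The right-hand side of the bound is nonzero only when $\sigma_{\min}(\mathcal A)\sigma_{\min}(W)>1$, and in that regime the statement itself fails. Take one node ($\mathcal A=[1]$), a scalar latent, $W=w>1$, and the gate $\bigl(e^{\lambda s}-1\bigr)/\bigl(w\varphi(s)\bigr)$, where $s$ is the end-time argument minus the start-time argument (any $\varphi$ with $\varphi(s)\ne0$ for $s>0$; an MLP need only match this at finitely many times). Then $F(\mathbf z,\tau,t,A)=e^{\lambda(t-\tau)}\mathbf z$ for every start time $\tau$, which is an exact flow. Hence $\mathbf z^*\equiv\mathbf z$ for $t\ge t_0^*$ and the left side is $0$, while the right side is $(L-k_0^*+1)(w-1)\Delta_{\mathrm{in}}>0$ for $\lambda\ne0$, $\mathbf z_0\ne0$.

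Conversely, under the normalizations the paper actually uses, the claim is trivial. Spectral normalization gives $\sigma_{\min}(W)\le\|W\|_2\le1$. The softmax makes $\mathcal A$ row-stochastic, so $\mathcal A\mathbf 1=\mathbf 1$ forces $\sigma_{\min}(\mathcal A)\le1$; likewise $D^{-1/2}AD^{-1/2}$ fixes $D^{1/2}\mathbf 1$, with $D$ the degree matrix. Hence $\eta\le\Delta_{\mathrm{in}}$, the right-hand side is $0$, and nonnegativity of the left side finishes the proof. That observation is the whole correct argument; no estimate of $\Gamma_i$ can give a positive bound.

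The paper does not supply your missing step either. It asserts $\eta(t)\ge\varphi(t-t_0^*)\sigma_{\min}(\mathcal A)\sigma_{\min}(W)\Delta_{\mathrm{in}}$ \emph{by the spectral properties} without justification. It then drops the factor $\varphi(t-t_0^*)<1$, which goes in the wrong direction.
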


This bound characterizes the minimal divergence induced by the interaction structure when re-initialization perturbations are amplified by graph-conditioned dynamics.

\paragraph{Reverse-time Trajectory Generation (RTG).}
% \begin{figure}[t]
% \centering
% \includegraphics[width=0.9\linewidth,]{figure/RTG.png}
% \caption{Reverse-time Trajectory Generation.}
% \label{fig:rtg}
% \end{figure}

Leveraging the invertibility of GSNF guaranteed by Theorem~\ref{thm:inveritbility},  we generate an additional self-supervised signal by reversing the learned latent trajectory. Starting from the final latent state $\mathbf{z}(t_L)$ obtained in Eq.\eqref{eq: z-forward},  we apply the learned flow in reverse time to obtain a reversed latent
trajectory $\{\tilde{\mathbf{z}}(t_k)\}_{k=1}^L$:
\begin{equation}
    \{\tilde{\mathbf{z}}(t_i)\}_{i=1}^L=\{F(\mathbf{z}(t_L), t_L, t_i, A)\}^L_{i=1}.
\end{equation}

The reversed trajectory is decoded and compared to the input sequence using the reconstruction loss:

\begin{equation}
\label{eq:RTG}
\mathcal{L}_{\text{RTG}} = \frac{1}{L} \sum_{i=1}^{L}\left\| f_\mathrm{dec}(\tilde{\mathbf{z}}(t_i)) - \mathbf{x}(t_i) \right\|^2.
\end{equation}
This reverse-time regularization encourages forward–backward consistency of the learned flow.
%This design provides a self-supervised regularization signal that encourages forward–backward consistency  of the learned flow and supports accurate reconstruction of the input sequence.

\subsection{Training}
Our model is a VAE-based framework augmented with a generative flow to capture latent interactions and ensure temporally consistent dynamics. The overall objective combines variational inference with supervised classification, and trajectory-level self-supervision:
\begin{equation}
\begin{aligned}
\mathcal{L}
=\, & \mathcal{L}_{\text{VAE}} +\, \alpha \mathcal{L}_{\text{CE}}+\, \beta\,\mathcal{L}_{\text{ITG}}+\, \gamma\,\mathcal{L}_{\text{RTG}},
\label{eq:loss}
\end{aligned}
\end{equation}
where $L_\text{VAE}$ is the evidence lower bound (ELBO)~\cite{kingma2013auto},
\begin{equation}
\begin{aligned}
\mathcal{L}_{\text{VAE}}(\phi, \theta) &= 
\mathbb{E}_{\mathbf{z}_0 \sim q_\phi(\mathbf{z}_0| X)}\big[\log p_\theta(X|\mathbf{z}_0)\big] \\
&\quad - \frac{1}{L}\sum_{i=1}^L D_{\text{KL}}\big(q_\phi(\mathbf{z}_0^i|X)\,\|\,p(\mathbf{z}_0)\big),
\end{aligned}
\label{eq:vae_loss}
\end{equation}
and $\mathcal{L}_{\text{CE}}$ is the cross-entropy loss for classification:

\begin{equation}
    \mathcal{L}_{\text{CE}} = -\sum_{y} p(y) \log p_{\lambda}(y \mid \mathbf{z}_{0}).
\end{equation}

The trajectory-level losses $\mathcal{L}_{\text{ITG}}$ and $\mathcal{L}_{\text{RTG}}$ act as regularizers complementing the main objective. $\mathcal{L}_{\text{ITG}}$ enforces divergence between original and re-initialized trajectories to capture interactions, while $\mathcal{L}_{\text{RTG}}$ promotes temporal consistency via reverse-time reconstruction. The relative weights $\alpha$, $\beta$, and $\gamma$ are reported in Appendix~\ref{app:hyperparams}.

\paragraph{ITG Training Variants.} 
For the ITG loss, we consider two variants for defining the minimum separation margin: 
(i) a fixed hyperparameter $\delta$; 
(ii) a theoretically derived margin $\delta_\mathrm{lb}$ set according to the lower bound in Theorem~\ref{thm:itg_lower_bound}. 
The latter requires no manual tuning, as the margin is automatically adapted during training along with the evolving model parameters.

\section{Experiments}
%In this section, we outline the experimental setup, introduce the datasets used for comparison, and present the experimental results with analysis.

\subsection{Datasets and Baselines}
We evaluated GSNF on five datasets for irregular time series classification: PhysioNet12~\cite{silva2012predicting}, P12~\cite{goldberger2000physiobank}, P19~\cite{reyna2020early}, MIMIC-IV~\cite{johnson2020mimic}, eICU~\cite{pollard2018eicu}. Detailed experimental settings are provided in Appendix~\ref{app:experiment}.

We compare with representative baselines from three major categories:
(i) continuous-time models, including GRU-D~\cite{che2018recurrent}, ODE-RNN~\cite{rubanova2019latent}, NeuralFlow~\cite{bilovs2021neural}, IVP-VAE~\cite{xiao2024ivp}, DualDynamics~\cite{oh2025dualdynamics} and FlowPath~\cite{oh2025flowpath};
(ii) graph-based models, including Raindrop~\cite{zhang2021graph}, GNeuralFlow~\cite{mercatali2024graph} and Hi-Patch~\cite{luohi};
(iii) other strong baselines, including mTAN~\cite{shukla2021multi}, ViTST~\cite{li2023time}, Warpformer~\cite{zhang2023warpformer}, and TimeCHEAT~\cite{liu2025timecheat}.

\subsection{Overall Comparisons}

% Table 2
\begin{table*}[t]
\centering
\begin{threeparttable}
\small
\setlength{\tabcolsep}{3.5pt}
\renewcommand{\arraystretch}{1.15}
\begin{tabular}{l cc cc cc cc cc cc}
\toprule
& \multicolumn{2}{c}{Physionet12}
& \multicolumn{2}{c}{P12}
& \multicolumn{2}{c}{P19}
& \multicolumn{2}{c}{MIMIC-IV} 
& \multicolumn{2}{c}{eICU}
\\ % <-- replace with your dataset name
\cmidrule(lr){2-3}\cmidrule(lr){4-5}\cmidrule(lr){6-7}\cmidrule(lr){8-9}\cmidrule(lr){10-11}
Method & AUROC & AUPRC & AUROC & AUPRC & AUROC & AUPRC & AUROC & AUPRC &AUROC & AUPRC \\
\midrule
GRU-D           & 79.1$\pm$6.9 & 42.7$\pm$7.2  & 81.9$\pm$2.1 & 46.1$\pm$4.7  &83.7$\pm$1.5 &46.9$\pm$2.1 & 82.2$\pm$1.8 & 48.3$\pm$2.1 & 84.6$\pm$1.5 & 49.6$\pm$2.4\\
ODE-RNN         & 80.8$\pm$0.4 & 33.7$\pm$4.1  & 81.5$\pm$0.5 & 42.3$\pm$0.7  &81.4$\pm$0.8 &45.7$\pm$1.0 & 81.0$\pm$0.6 & 47.3$\pm$0.7 & 83.3$\pm$0.9 & 50.3$\pm$1.5\\
NeuralFlow      & 80.9$\pm$0.1 & 51.5$\pm$1.8  & 81.3$\pm$0.1 & 51.5$\pm$2.0   &84.1$\pm$0.7&52.1$\pm$3.3 & 79.7$\pm$0.4 & 51.5$\pm$1.2 & 83.1$\pm$0.8 & 50.7$\pm$1.8\\
IVP-VAE         & 81.1$\pm$2.3 & 46.2$\pm$2.3  & 81.8$\pm$0.2 & 53.5$\pm$1.5  &85.6$\pm$1.2 &53.7$\pm$2.7 & 81.8$\pm$0.5 & 52.7$\pm$1.4 & 83.6$\pm$1.7 & 53.2$\pm$2.9\\
DualDynamics    & 86.1$\pm$0.1 & 55.3$\pm$1.5  & 86.5$\pm$0.2 & 53.2$\pm$0.8 &90.7$\pm$0.4& 56.7$\pm$1.4  & 84.4$\pm$0.4 & 54.7$\pm$0.9 & 84.9$\pm$1.1 & 54.2$\pm$1.5\\
FlowPath    & 85.3$\pm$0.2 & 55.3$\pm$1.7  &\underline{86.6$\pm$0.3} &54.3$\pm$0.7&88.4$\pm$0.7& 56.4$\pm$1.3 &84.4$\pm$0.7 & 53.2$\pm$0.8  & 85.1$\pm$1.3 & 53.8$\pm$1.4\\
\midrule
Raindrop        & 81.2$\pm$0.9 & 37.3$\pm$2.0  & 82.8$\pm$1.7 & 39.4$\pm$2.4  &87.0$\pm$2.3&51.8$\pm$5.5 & 79.8$\pm$1.3 & 35.2$\pm$1.1 & 84.6$\pm$2.1 & 55.1$\pm$2.7\\
GNeuralFlow     & 84.5$\pm$0.8 & 53.7$\pm$2.4  & 85.5$\pm$0.8 & \underline{55.5$\pm$1.8}  &88.4$\pm$0.7&56.4$\pm$1.3 & 83.6$\pm$0.7 & 53.1$\pm$0.8& 85.1$\pm$0.7 & 54.5$\pm$2.6\\
Hi-Patch       & 86.4$\pm$0.9&  56.5$\pm$1.4  & 86.5$\pm$0.6&  53.3$\pm$0.9 & 88.4$\pm$1.1 & 56.2$\pm$3.3 &  84.9$\pm$0.2 &  54.2$\pm$1.0 &  84.5$\pm$0.7& 55.3$\pm$1.3\\
\midrule
mTAN            & 85.8$\pm$1.3 & 50.4$\pm$1.0  & 84.2$\pm$0.8 & 52.5$\pm$1.3 &84.0$\pm$1.3&50.6$\pm$2.0 & 83.8$\pm$0.3 & 46.6$\pm$0.5 & 80.9$\pm$2.4 & 48.1$\pm$3.2\\
ViTST           & 81.3$\pm$1.9 & 37.4$\pm$2.9  & 86.3$\pm$0.1 & 50.8$\pm$1.5 &88.3$\pm$2.0&53.1$\pm$3.4 & 81.8$\pm$0.3 & 39.6$\pm$1.3 & 80.5$\pm$1.3 & 47.7$\pm$2.4\\
Warpformer      & 83.4$\pm$0.7 & 43.5$\pm$2.3  & 85.4$\pm$0.5 & 50.4$\pm$1.5  &87.2$\pm$1.8&52.7$\pm$2.4 & 84.6$\pm$0.3 & 46.6$\pm$0.9 & 84.8$\pm$0.4 & 53.7$\pm$1.1\\
TimeCHEAT       & 84.5$\pm$0.7 & 46.3$\pm$1.5  & 84.6$\pm$0.7 & 48.2$\pm$1.9  &89.1$\pm$1.4&55.2$\pm$1.9 & 83.1$\pm$0.3 & 51.2$\pm$0.9 & 83.9$\pm$1.4 & 54.4$\pm$1.5\\

\midrule
GSNF($\delta$)            & \underline{86.4$\pm$1.0} & \underline{56.8$\pm$1.0}  & 86.6$\pm$0.1 & 54.6$\pm$1.4   &\underline{90.7$\pm$0.6} &\underline{57.4$\pm$2.3} & \underline{84.9$\pm$0.5} & \underline{55.3$\pm$1.3} & \textbf{85.2$\pm$1.4} & \underline{56.1$\pm$1.0}\\

GSNF($\delta_\mathrm{lb}$)   & \textbf{86.7$\pm$0.6} & \textbf{56.9$\pm$0.7}  & \textbf{87.5$\pm$0.3} & \textbf{56.3$\pm$0.4}  &\textbf{91.2$\pm$0.5} &\textbf{57.9$\pm$2.4}  & \textbf{85.3$\pm$0.5} & \textbf{55.5$\pm$1.5} & \underline{85.2$\pm$0.9} & \textbf{56.1$\pm$1.1}\\
\bottomrule
\end{tabular}
\end{threeparttable}

\caption{Comparison of GSNF and baseline methods on classification tasks across five datasets. GSNF variants achieve the top performance across datasets, with the theoretically derived variant GSNF($\delta_{\mathrm{lb}}$) performing better in most cases.
}
\label{tab:main_result}
\end{table*}

\begin{figure}[t]
\centering
\includegraphics[width=\linewidth,]{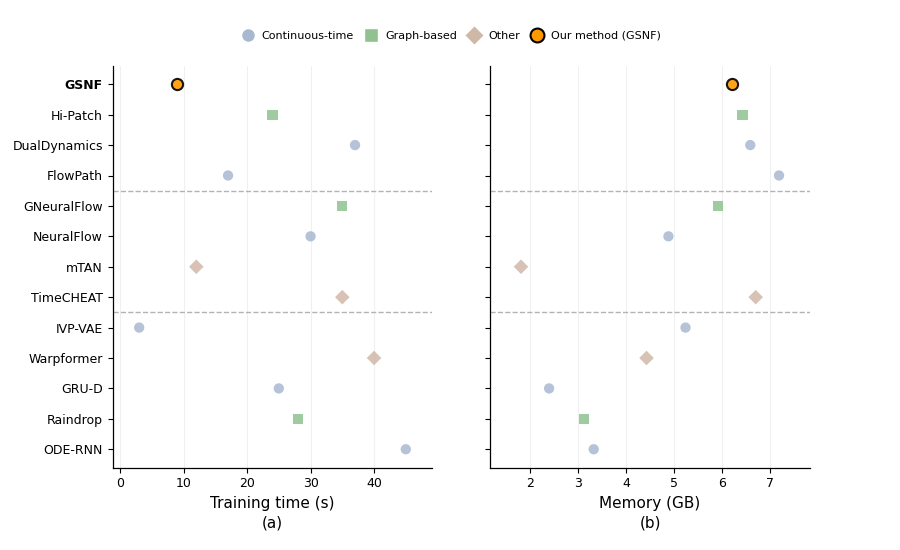}
\caption{Computational efficiency on the PhysioNet12 dataset.
(a) Training time and (b) peak GPU memory usage.
Methods are ordered from top to bottom by decreasing AUPRC, with dashed lines indicating three performance tiers. GSNF achieves the highest AUPRC and attains the lowest training time and memory usage within the top performance tier.
Different marker shapes denote different categories of methods.}
\label{fig:efficiency}
\end{figure}

Overall results are summarized in Table \ref{tab:main_result}. Across all five datasets, the top two models in terms of AUROC and AUPRC are both variants of our method GSNF, indicating stable performance across datasets with diverse missing rates, label imbalance, and feature dimensionality. 

Notably, GSNF achieves consistently strong improvements in AUPRC, which is more indicative than AUROC in highly imbalanced tasks. The largest gain is observed on P19, where GSNF outperforms the second-best method by 1.2 AUPRC points. Since P19 has the lowest positive rate among all evaluated datasets, this result suggests that incorporating inter-variable interactions is especially beneficial under severe class imbalance, enabling the model to better identify rare but clinically critical positive cases.

In addition,  the variant of GSNF with the theoretically derived margin, GSNF($\delta_\mathrm{lb}$), outperforms the best manually tuned variant GSNF($\delta$) on four out of five datasets (PhysioNet12, P12, P19, and MIMIC-IV), and achieves comparable performance on eICU. This observation indicates that the theoretically derived margin $\delta_\mathrm{lb}$ provides a principled alternative to dataset-specific hyperparameter tuning and yields robust performance in practice. 
As a result, $\delta_\mathrm{lb}$ offers a reliable and practical mechanism for regulating trajectory separation without incurring additional tuning overhead.

Fig.~\ref{fig:efficiency} further examines computational efficiency. Within the top AUPRC tier, GSNF achieves the lowest training time. This efficiency is largely attributed to the IVP-VAE backbone, which exhibits the lowest training time among all the baselines. Notably, despite being the fastest baseline, IVP-VAE lies in the third performance tier; GSNF incorporating interaction modeling resolves this limitation by elevating GSNF to the top AUPRC tier, while retaining the second-lowest training time overall. Meanwhile, peak GPU memory generally increases with higher AUPRC across methods. For clarity, only one GSNF point is shown in Fig.~\ref{fig:efficiency}, as the two GSNF variants have comparable training time and memory usage due to the negligible overhead of the lower-bound computation. Detailed numerical results are provided in Appendix~\ref{app:performance_analysis}.

\subsection{Ablation Study}
\begin{table}[t]
\centering
\begin{threeparttable}
\footnotesize % 比\small还小
\setlength{\tabcolsep}{2pt} 
\renewcommand{\arraystretch}{1.15}
\resizebox{1.0\columnwidth}{!}{%
\begin{tabular}{lcccccccccc}
\toprule
 & \multicolumn{2}{c}{Physionet12} & \multicolumn{2}{c}{P12} & \multicolumn{2}{c}{P19} &\multicolumn{2}{c}{MIMIC-IV} & \multicolumn{2}{c}{eICU}\\
\cmidrule(lr){2-3} \cmidrule(lr){4-5} \cmidrule(lr){6-7} \cmidrule(lr){8-9} \cmidrule(lr){10-11}
Method & AUROC & AUPRC & AUROC & AUPRC & AUROC & AUPRC & AUROC & AUPRC & AUROC & AUPRC\\
\midrule
GSNF                   & \textbf{86.7} & \textbf{56.9}  & \textbf{87.5} & \textbf{56.3}  & \textbf{91.2} & \textbf{57.9}& \textbf{85.3} & \textbf{55.5} & \textbf{85.2} & \textbf{56.1}\\
-w/o ITG               & 84.4 & 51.3  & 84.7 & 51.2 & 87.9 & 54.4 & 85.1 & 52.3 & 84.2 & 52.1\\
-w/o RTG               & 85.5 & 53.7& 85.5 & 53.0 & 89.2 & 55.7& 84.5 & 53.9  & 84.0 & 53.7 \\
\hspace{1em}-w/o RTG\&ITG   & 84.1 & 48.9  & 83.9 & 50.1 & 86.1 & 53.8& 83.1 & 51.9 & 83.5 & 51.4\\
-w/o graph             & 82.7 & 46.4 & 82.7 & 48.2 & 84.8 & 52.9 & 81.4 & 50.2  & 81.6 & 49.8\\
\bottomrule
\end{tabular}
}
\end{threeparttable}
\caption{Ablation study. The -w/o graph variant yields the largest drop, especially in AUPRC, followed by -w/o ITG+RTG; -w/o ITG degrades performance more than -w/o RTG.}
\label{tab:ablation1}
\end{table}

\begin{figure}[t]
\centering
\begin{subfigure}[t]{0.45\columnwidth}
    \centering
    \includegraphics[width=\linewidth]{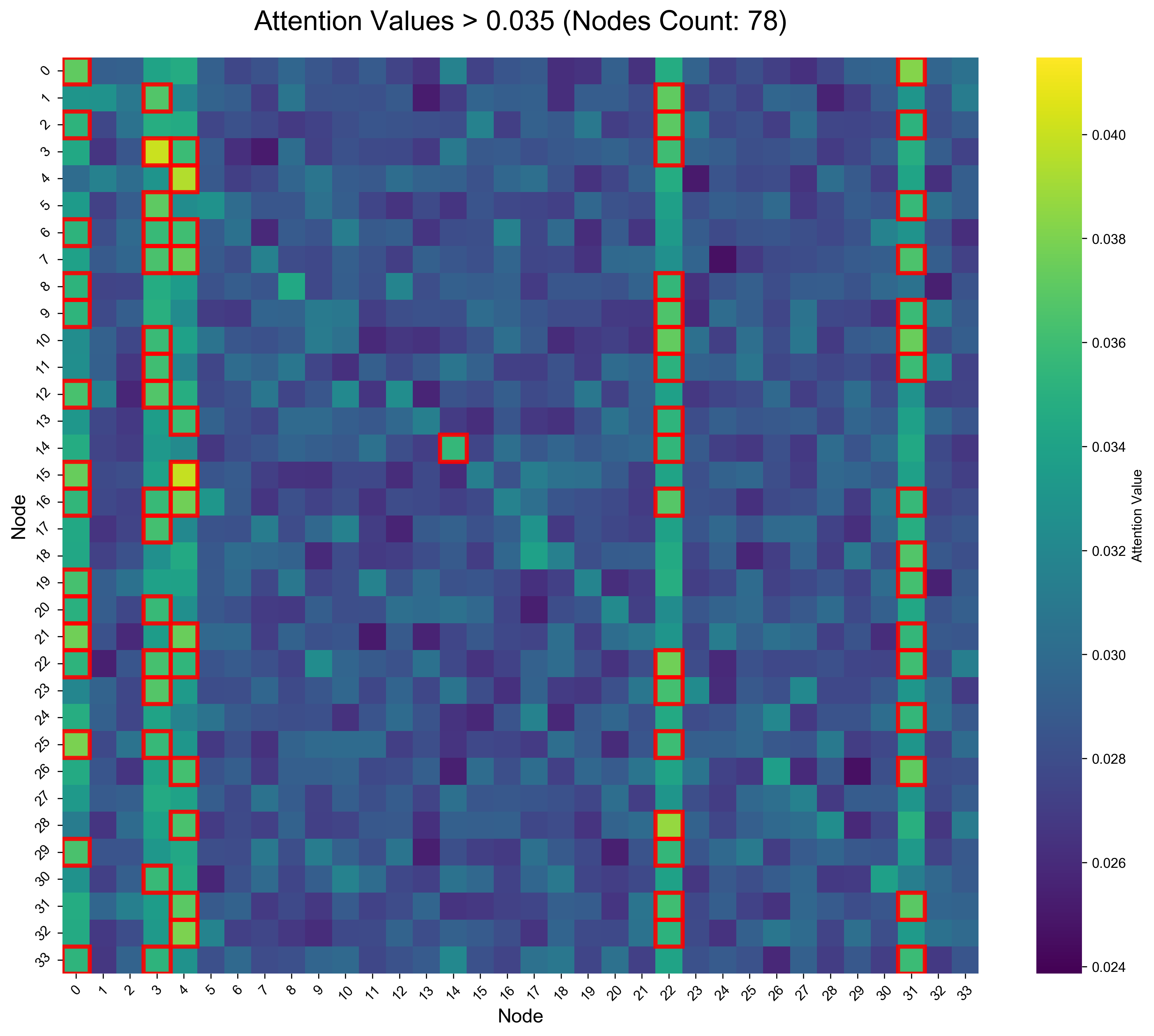}
    \caption{GSNF($\delta_{lb}$)}
    \label{fig:ablantion_GSNF_lb}
\end{subfigure}
% \hfill
% \begin{subfigure}[t]{0.45\columnwidth}
%     \centering
%     \includegraphics[width=\linewidth]{figure/attention_GSNF.png}
%     \caption{GSNF($\delta$)}
%     \label{fig:ablantion_GSNF}
% \end{subfigure}
\hfill
\begin{subfigure}[t]{0.45\columnwidth}
    \centering
    \includegraphics[width=\linewidth]{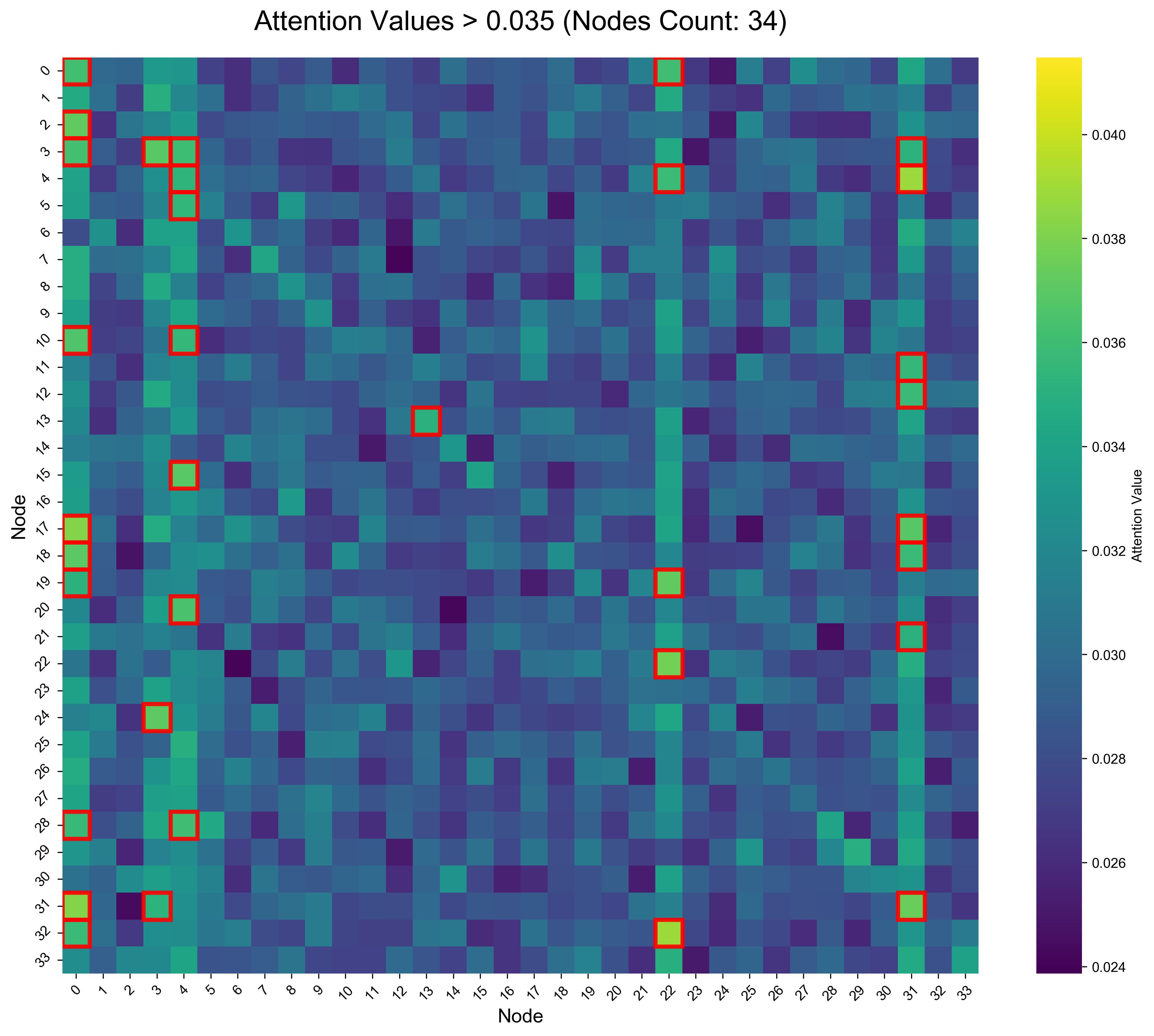}
    \caption{-w/o ITG}
    \label{fig:ablantion_ITG}
\end{subfigure}
\caption{ Interaction graphs with and without ITG  on the P19 dataset.
Attention weights are normalized by softmax, and edges highlighted by red boxes denote attention values above 0.035. With ITG, the interaction graph contains substantially more salient connections (78 edges) than without ITG (34 edges), with the additional connections primarily concentrated around key clinical variables. (0: HR, 3: SBP, 4: MAP, 22: lactate, 31: WBC).
%(0: Heart rate, 3: Systolic blood pressure, 4: Mean arterial pressure, 22: Lactic acid, 31: White blood cell count).
}
\label{fig:graph_ablantion}
\end{figure}

\begin{figure}[t]
\centering
\begin{subfigure}[t]{0.45\columnwidth}
    \centering
    \includegraphics[width=\linewidth]{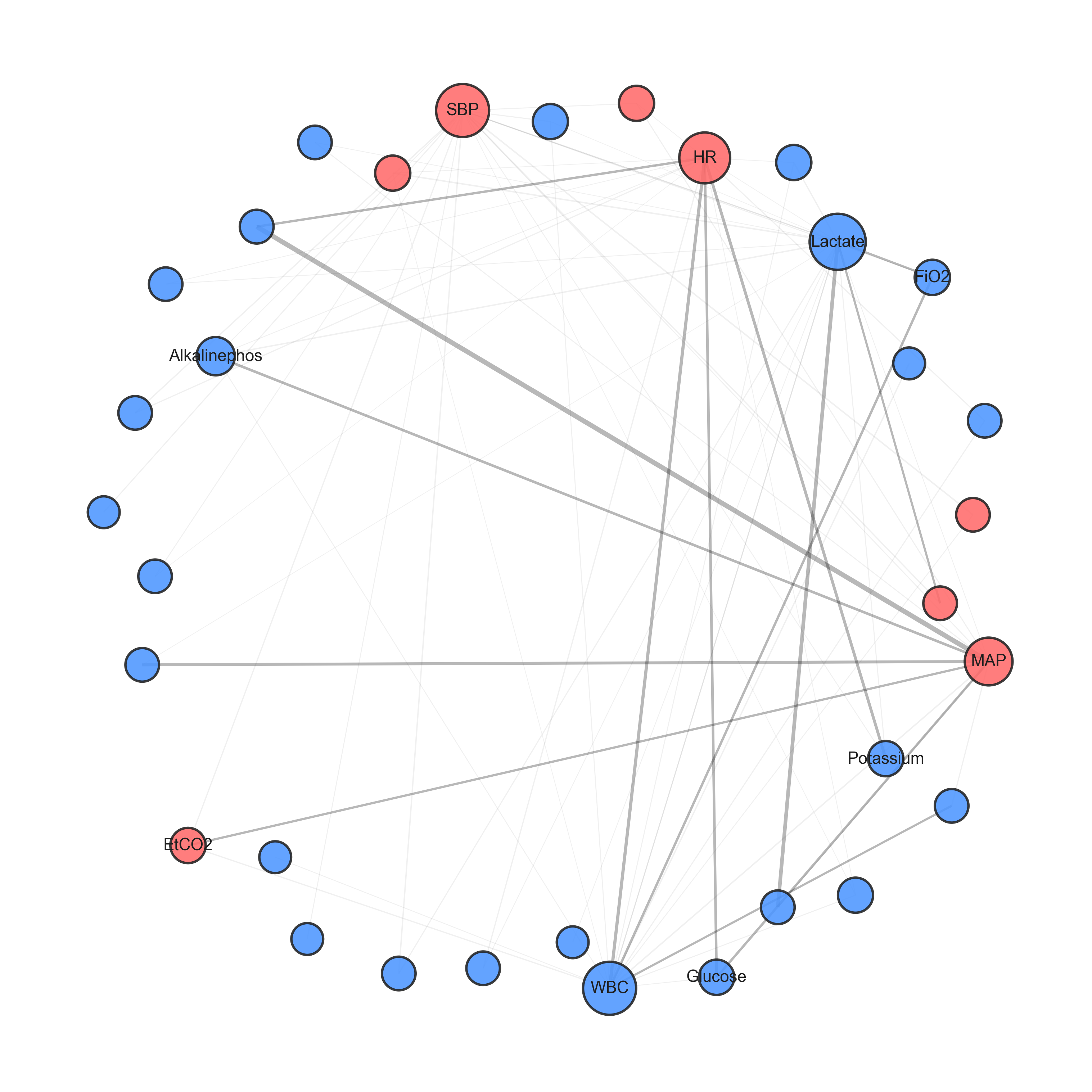}
    \caption{Graph of GSNF($\delta_{lb}$)}
    \label{fig:case_graph}
\end{subfigure}
% \hfill
% \begin{subfigure}[t]{0.45\columnwidth}
%     \centering
%     \includegraphics[width=\linewidth]{figure/attention_GSNF.png}
%     \caption{GSNF($\delta$)}
%     \label{fig:ablantion_GSNF}
% \end{subfigure}
\hfill
\begin{subfigure}[t]{0.45\columnwidth}
    \centering
    \includegraphics[width=\linewidth]{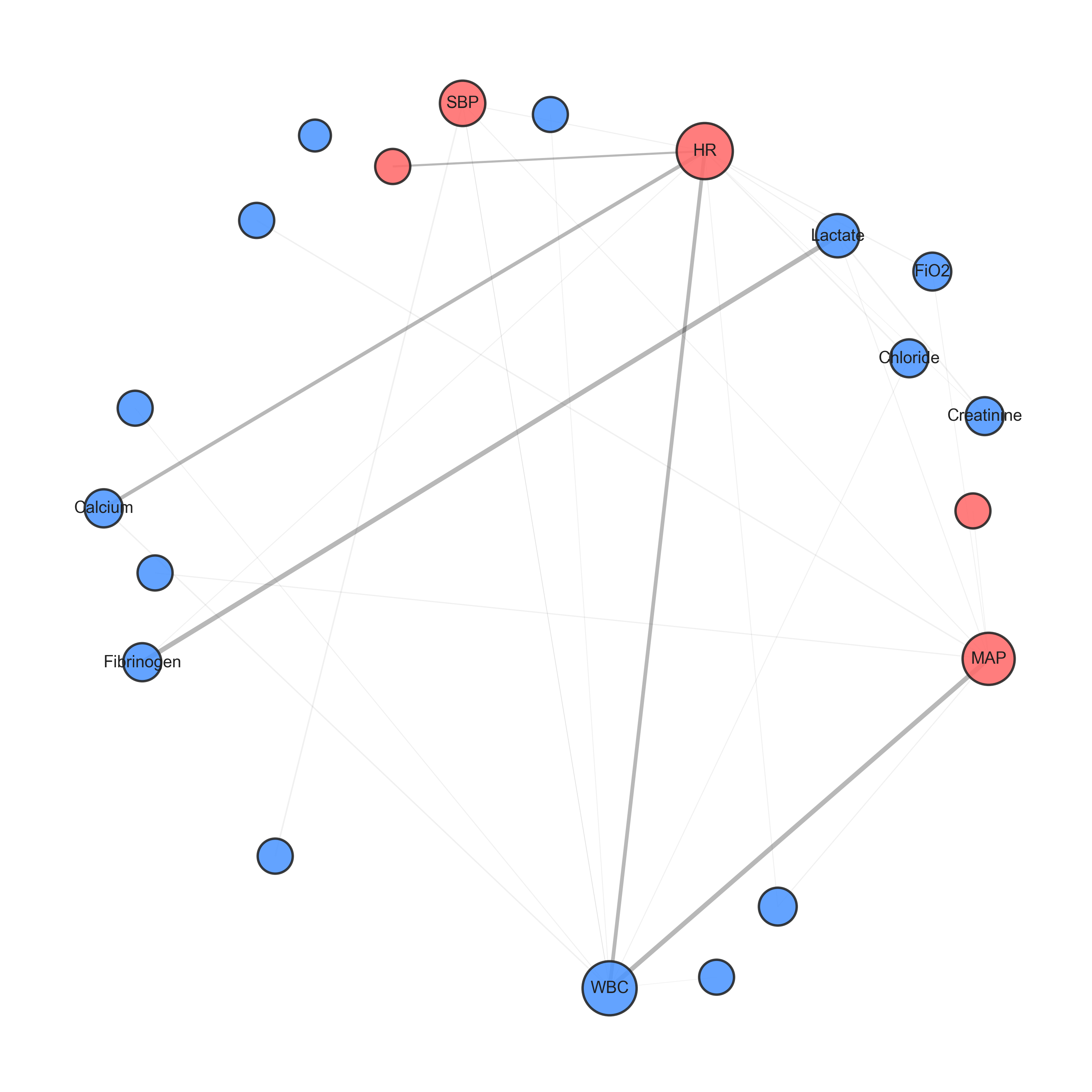}
    \caption{-w/o ITG}
    \label{fig:case_graph_woITG}
\end{subfigure}
\caption{Visualization of learned interaction graph  on P19. Node size reflects weighted degree, node color denotes variable type
(red: vital signs; blue: laboratory variables; left: 8/25; right: 5/15). Only the top 20\% strongest interactions are shown, with labels displayed for the most
connected nodes, highlighting the structural differences induced by ITG.}
\label{fig:graph_ablantion2}
\end{figure}

% \begin{figure}[t]
% \centering
% \begin{subfigure}[t]{0.45\columnwidth}
%     \centering
%     \includegraphics[width=\linewidth]{figure/Adj_1.png}
%     \caption{GSNF}
%     \label{fig:graph_with_repulsion}
% \end{subfigure}
% \hfill
% \begin{subfigure}[t]{0.45\columnwidth}
%     \centering
%     \includegraphics[width=\linewidth]{figure/Adj_2.png}
%     \caption{-w/o ITG}
%     \label{fig:graph_without_repulsion}
% \end{subfigure}
% \caption{Graph structures with and without ITG. 
% The highlighted nodes (11: Glasgow Coma Score, 36: White Blood Cell Count) are shown to have stronger and more consistent connections when ITG is applied in GSNF. These nodes correspond to clinically important vital signs for mortality prediction, indicating that ITG helps capture critical variable interactions.}
% \label{fig:graph_comparison}
% \end{figure}

We construct four ablation variants of GSNF by removing ITG, RTG, both ITG and RTG, or the interaction graph. Table~\ref{tab:ablation1} presents 
the corresponding ablation results across five datasets, showing consistent performance degradation when any component is removed.

Among all components, removing the interaction graph results in the largest performance degradation across datasets, especially in AUPRC, highlighting the necessity of explicitly modeling inter-variable interactions. Removing both ITG and RTG leads to the second-largest performance drop, indicating that these two trajectory-level supervision mechanisms jointly contribute to stabilizing and refining interaction learning. When considered individually, removing ITG causes a larger performance decrease than removing RTG.

Beyond numerical performance, Fig.~\ref{fig:graph_ablantion} 
provides a qualitative view of how ITG influences interaction learning. Attention weights are normalized by softmax, and red boxes indicate edges whose attention values exceed 0.035.  In both GSNF and its w/o ITG ablation, high-attention connections are primarily  associated with a small set of clinically important variables (0: HR, heart rate; 3: SBP, systolic blood pressure; 4: MAP, mean arterial pressure;
22: lactate; 31: WBC, white blood cell count),  forming visually prominent vertical regions. When ITG is enabled, the number of edges highlighted by red boxes increases from 34 to 78, with the additional connections remaining concentrated around these variables. This suggests that ITG
reinforces coordinated interactions rather than introducing diffuse or noisy edges.

Further, Fig.~\ref{fig:graph_ablantion2} 
complements this analysis by visualizing the learned interaction graphs. The interaction graph learned by GSNF exhibits stronger and denser connectivity, with HR, SBP, MAP, lactate, and WBC acting as clear hub nodes around which high-weight
edges concentrate to form a coherent interaction backbone. In contrast, the w/o ITG ablation produces few strong edges, with limited
connectivity and substantially reduced structural complexity.
Most nodes remain weakly connected or isolated, leading to a sparse graph that captures
only a small subset of interactions.

% Fig.~\ref{fig:graph_ablantion2} visualizes the learned interaction graphs under two settings. Consistent with the attention weight patterns observed in Fig.~\ref{fig:graph_ablantion}, the LB graph (Right) effectively captures the centrality of clinically critical variables. Specifically, Heart Rate, SBP, MAP, Lactate, and WBC emerge as prominent central nodes with dense connections (highlighted in red for vital signs). This structural alignment with medical intuition is significant, as hemodynamic instability (reflected by SBP, MAP, HR) and inflammatory markers (Lactate, WBC) are primary drivers of patient outcomes in critical care. The pronounced centrality and rich connectivity of these nodes confirm that the data-dependent margin $\delta_{lb}$ effectively regularizes trajectory generation, revealing robust and interpretable interactions.In contrast, the w/o ITG graph (Left) exhibits a significantly sparser topology, retaining only a skeletal structure with scattered connections. The distinct evolution from this sparse baseline to the highly interconnected, clinically aligned structure in the LB setting validates that the ITG mechanism successfully strengthens interaction learning by reinforcing coordinated relationships among key physiological variables.

\subsection{Parameters Sensitivity}
We analyze the sensitivity of ITG to the re-initialization time and the tuned margin $\delta$ on PhysioNet12, as shown in Figure~\ref{fig:param_sensitivity}, and further report results using the theoretical lower-bound $\delta_{\mathrm{lb}}$ as a non-tunable reference in Figure~\ref{fig:traj_evolution2}.

\begin{figure}[t]
\centering
\begin{subfigure}[t]{0.5\columnwidth}
    \centering
    \includegraphics[width=\linewidth]{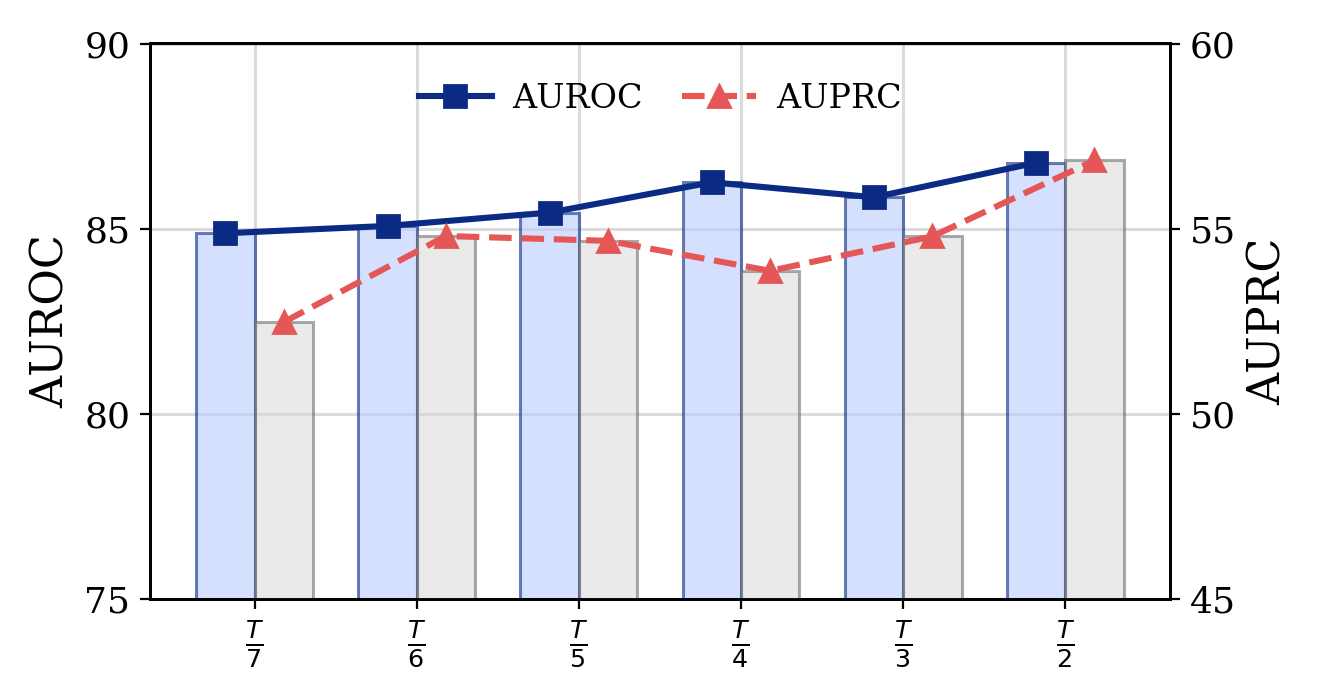}
    \caption{Re-initialization time $t_0^*$}
    \label{fig:sensitivity_T}
\end{subfigure}%
\begin{subfigure}[t]{0.5\columnwidth}
    \centering
    \includegraphics[width=\linewidth]{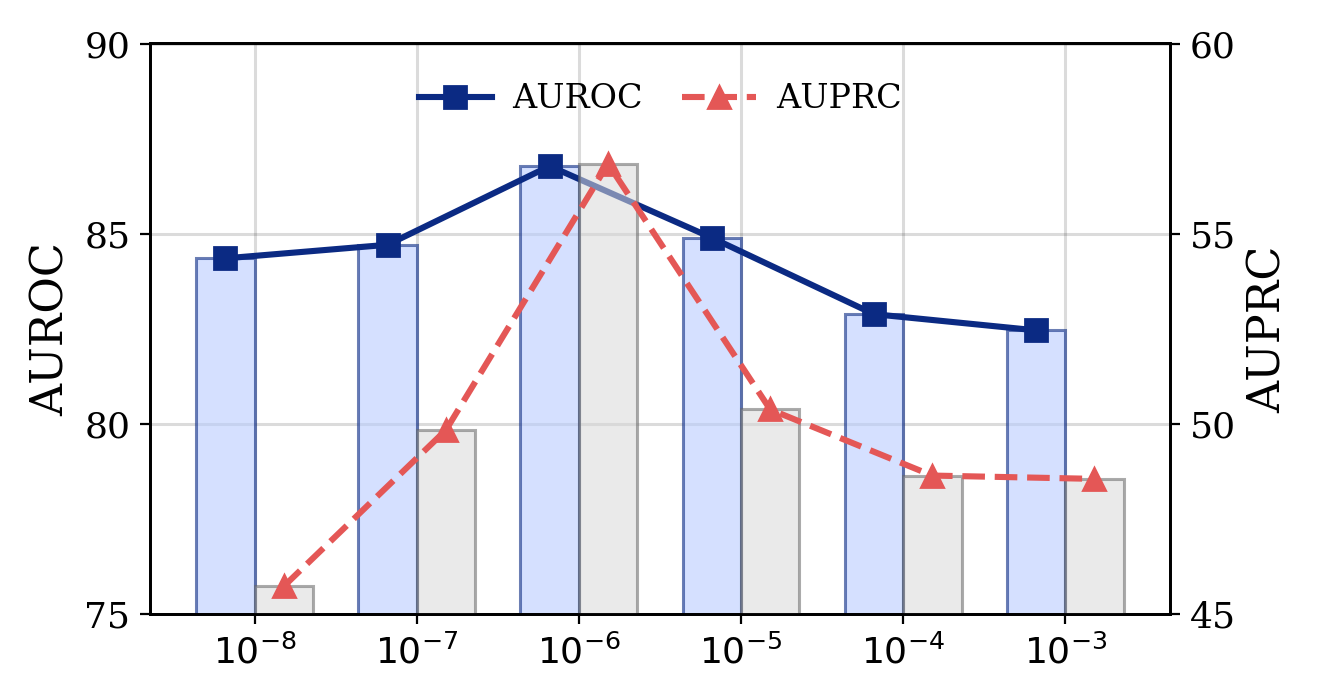}
    \caption{Fixed Separation Margin $\delta$}
    \label{fig:sensitivity_delta}
\end{subfigure}
\caption{Parameter sensitivity analysis of GSNF on PhysioNet12. Performance peaks at an intermediate $t_0^*$ and depends on $\delta$ mainly at the order-of-magnitude scale, with an optimum around $10^{-6}$.}
\label{fig:param_sensitivity}
\end{figure}

\paragraph{Re-initialization time $t_0^*$.} Figure~\ref{fig:param_sensitivity}(a) shows that GSNF achieves the best performance when the re-initialization time $t_0^*$ is set to an intermediate point around $L/2$. When $t_0^*$ is chosen too early, the re-initialized trajectories remain similar to the original ones, resulting in limited divergence and weakened ITG supervision. In contrast, setting $t_0^*$ too late reduces the effective evolution window, restricting interaction exposure and leading to degraded performance.

\paragraph{Manually Selected Separation Margin $\delta$.} 
Figure~\ref{fig:param_sensitivity}(b) shows that while performance varies across orders of magnitude of $\delta$, it remains relatively stable within each order, with the best results attained around $\delta = 10^{-6}$. This indicates that GSNF is robust to fine-grained variations of $\delta$ and does not require precise tuning. This is because overly large margins induce excessive trajectory divergence, whereas overly small margins provide insufficient separation of interaction-induced differences, reflecting a trade-off between interaction exposure and dynamical fidelity.

\paragraph{Theoretically Derived  Separation Margin $\delta_{lb}$.}
As shown in Fig.~\ref{fig:traj_evolution2}, $\delta_{lb}$ increases during the early stages of training and gradually converges to a stable value as training proceeds. This behavior reflects the data-dependent nature of the theoretical bound, which adjusts with the training process rather than remaining fixed. The converged value lies within the same order of magnitude as the manually selected separation margin (around $10^{-6}$ on PhysioNet12), indicating that the theoretical bound captures an appropriate scale of trajectory divergence without manual tuning. Compared with a fixed manually selected separation margin $\delta$, the data-dependent theoretically derived margin $\delta_{lb}$ enables more flexible trajectory separation during training and leads to improved classification performance.

\begin{figure}[t]
\centering
\includegraphics[width=0.9\linewidth,]{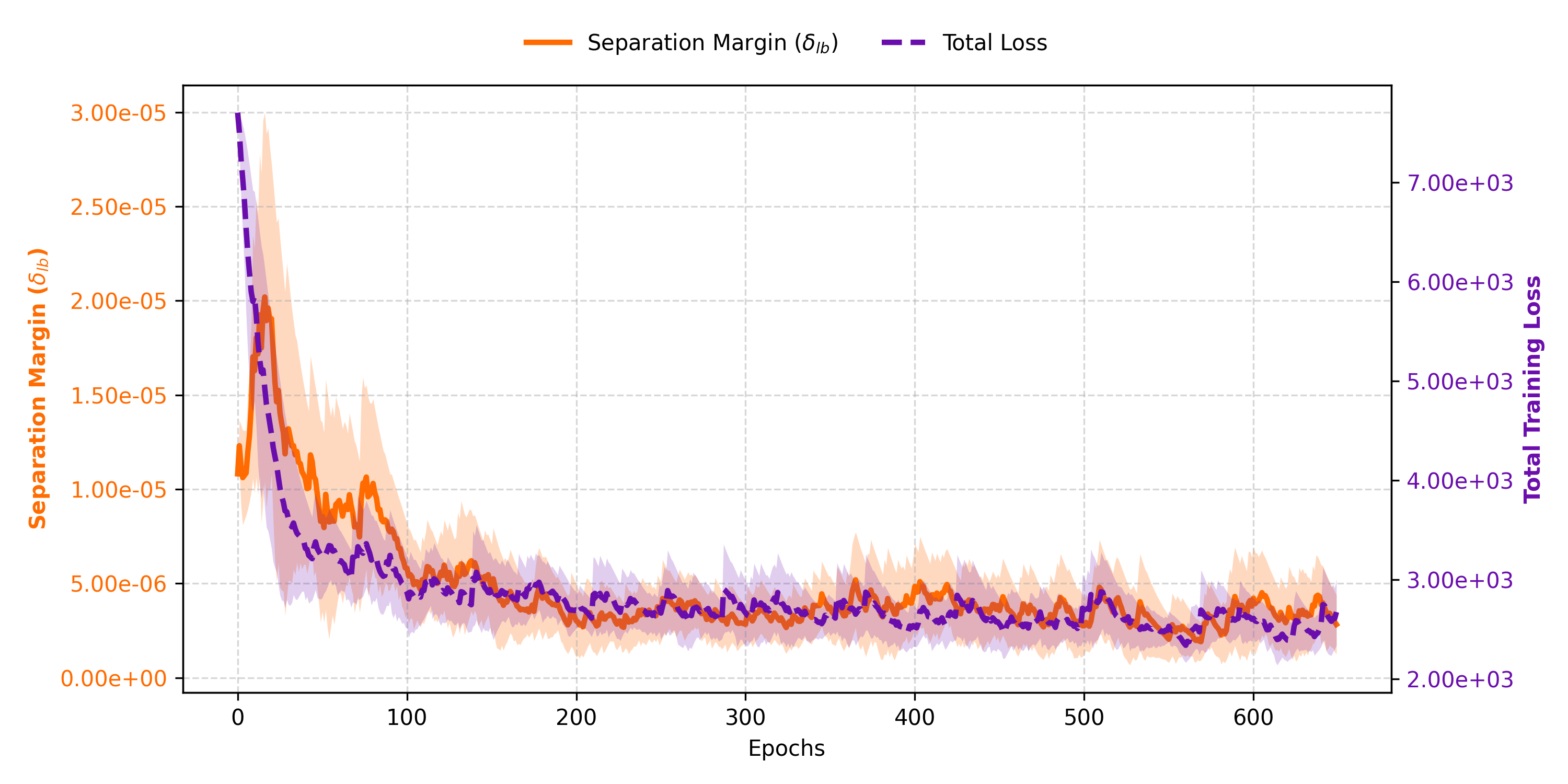}
\caption{Training dynamics of the theoretically derived separation margin $\delta_{lb}$ and total loss on PhysioNet12. The theoretically derived margin $\delta_{lb}$ maintains a comparable scale to the manually selected margin $\delta$, on the order of $10^{-6}$, across training.}
\label{fig:traj_evolution2}
\end{figure}

\section{Conclusion}
We propose GSNF, a one-step graph-structured Neural Flow that integrates an interaction graph into a parallel flow formulation for irregularly sampled multivariate time series. GSNF introduces trajectory-level self-supervision via ITG and RTG, which strengthens interaction learning by exploiting trajectory discrepancies in a one-step setting without iterative solvers. Experiments on five real-world datasets demonstrate that GSNF variants consistently achieve state-of-the-art performance, with GSNF($\delta_{\mathrm{lb}}$), which uses a theoretically derived separation margin, performing best in most cases. Among top-performing methods, GSNF attains this performance while requiring the lowest training time and peak GPU memory.

Future work will extend GSNF to time-varying interaction graphs, enabling dynamically evolving dependencies in non-stationary systems. We also plan to explore broader applications such as forecasting and generative modeling, and to examine its scalability in large-scale real-world scenarios.

\section{Impact Statements}
This paper presents work whose goal is to advance the field of machine learning. There are many potential societal consequences of our work, none of which we feel must be specifically highlighted here.

\nocite{langley00}

\bibliography{LaTeX/example_paper}
\bibliographystyle{icml2026}

%%%%%%%%%%%%%%%%%%%%%%%%%%%%%%%%%%%%%%%%%%%%%%%%%%%%%%%%%%%%%%%%%%%%%%%%%%%%%%%
%%%%%%%%%%%%%%%%%%%%%%%%%%%%%%%%%%%%%%%%%%%%%%%%%%%%%%%%%%%%%%%%%%%%%%%%%%%%%%%
% APPENDIX
%%%%%%%%%%%%%%%%%%%%%%%%%%%%%%%%%%%%%%%%%%%%%%%%%%%%%%%%%%%%%%%%%%%%%%%%%%%%%%%
%%%%%%%%%%%%%%%%%%%%%%%%%%%%%%%%%%%%%%%%%%%%%%%%%%%%%%%%%%%%%%%%%%%%%%%%%%%%%%%
\newpage
\appendix
\onecolumn
% \section{You \emph{can} have an appendix here.}
\input{supplementary.tex}

\end{document}

%% file: supplementary.tex
\clearpage

\section{Proof of Theorem \ref{thm:inveritbility} (Invertibility of Graph-Structured Neural Flows)}
\label{app:proof_invertibility}

\begin{proof}

Let $F(\mathbf z(t_0),t_0,t,A)$ denote the GSNF defined in Eq.~\eqref{eq:gsnf}.
We assume that the interaction function $g(\cdot,t_0,t,A)$ is contractive with
Lipschitz constant $L_g<1$, which is ensured by applying spectral normalization to all linear layers in both the MLP and GCN.

To establish invertibility, we show that $F(\cdot,t_0,t,A)$ is bi-Lipschitz.
For any two latent states $\mathbf z_1(t_0)$ and $\mathbf z_2(t_0)$, we have
\begin{equation}
\begin{aligned}
\|F(\mathbf{z}_1(t_0), t_0, t, A) - F(\mathbf{z}_2(t_0), t_0, t, A)\| 
&= \big\|\mathbf{z}_1(t_0) - \mathbf{z}_2(t_0) + \varphi(t-t_0)\, \big(g(\mathbf{z}_1(t_0), t_0, t, A) - g(\mathbf{z}_2(t_0), t_0, t, A)\big) \big\| \\
&\leq \|\mathbf{z}_1(t_0) - \mathbf{z}_2(t_0)\| + \varphi(t-t_0) \| g(\mathbf{z}_1(t_0), t_0, t, A) - g(\mathbf{z}_2(t_0), t_0, t, A) \| \\
&\leq (1 + \varphi(t-t_0) L_g) \|\mathbf{z}_1(t_0) - \mathbf{z}_2(t_0)\|.
\end{aligned}
\end{equation}

On the other hand, by the reverse triangle inequality,
\begin{equation}
\begin{aligned}
\|F(\mathbf{z}_1(t_0), t_0, t, A) - F(\mathbf{z}_2(t_0), t_0, t, A)\| 
&\geq \|\mathbf{z}_1(t_0) - \mathbf{z}_2(t_0)\| - \varphi(t-t_0) \| g(\mathbf{z}_1(t_0), t_0, t, A) - g(\mathbf{z}_2(t_0), t_0, t, A) \| \\
&\geq (1 - \varphi(t-t_0) L_g) \|\mathbf{z}_1(t_0) - \mathbf{z}_2(t_0)\|.
\end{aligned}
\end{equation}

Since $\varphi(t-t_0)\in[0,1)$ and $L_g<1$, we have $1-\varphi(t-t_0)L_g>0$.
Together with the upper and lower bounds derived above, this implies that
$F(\cdot,t_0,t,A)$ satisfies
\begin{equation}
0 < \bigl(1-\varphi(t-t_0)L_g\bigr)\,
\|\mathbf z_1(t_0)-\mathbf z_2(t_0)\|
\;\le\;
\|F(\mathbf z_1(t_0),t_0,t,A)-F(\mathbf z_2(t_0),t_0,t,A)\|
\;\le\;
\bigl(1+\varphi(t-t_0)L_g\bigr)\,
\|\mathbf z_1(t_0)-\mathbf z_2(t_0)\|,
\end{equation}
and is therefore bi-Lipschitz.

By standard properties of bi-Lipschitz mappings, its inverse $F^{-1}$ is
Lipschitz continuous with Lipschitz constant bounded by
$1/(1-\varphi(t-t_0)L_g)$.

\end{proof}

% \begin{theorem}[Invertibility of Graph-Structured Neural Flows]
% Given the graph-structured neural flow
% \begin{align}
% F(\mathbf{z}(t), t_0, t, A) &= \mathbf{z}(t) + \varphi(t)\cdot g(\mathbf{z}(t), t_0, t, A), \\
% g(\mathbf{z}(t), t_0, t, A) &= \mathrm{MLP}(\mathbf{z}(t)) \odot \mathrm{GCN}(\mathbf{z}(t), A),
% \end{align}
% where $\varphi(t_0) = 0$ and $\varphi(t) \in [0,1]$.  
% If spectral normalization is applied to each linear layer in both the MLP and the GCN,  
% then the mapping $F(\cdot, t_0, t, A)$ is invertible.
% \end{theorem}

% \begin{theorem}
% By spectral normalization, the operator norm of each linear layer is bounded, so the Lipschitz constant of $g$ satisfies $L_g < 1$.  

% With this property established, we proceed to analyze the Lipschitz continuity of $F$ and its inverse. 

% \end{theorem}

\section{Proof of Theorem \ref{thm:itg_lower_bound} (A Data-Dependent Lower Bound for the ITG Separation Margin)}
\label{app:itg_proof}

\begin{proof}
Recall that the GSNF adopts a residual flow parameterization
\begin{equation}
F(\mathbf z(t_0),t_0,t,A)
=
\mathbf z(t_0)
+
\varphi(t-t_0)\,g(\mathbf z(t_0),t_0,t,A).
\end{equation}

Let $\mathbf z(t)$ and $\mathbf z^*(t)$ denote the original and re-initialized latent
trajectories with initial states $\mathbf z_0$ and $\mathbf z_0^*$, respectively.
For $t\ge t_0^*$, define $\delta(t):=\mathbf z^*(t)-\mathbf z(t)$. Then
\begin{align}
\delta(t)
&=
F(\mathbf z_0^*,t_0^*,t,A)-F(\mathbf z_0,t_0,t,A)\nonumber\\
&=
(\mathbf z_0^*-\mathbf z_0)
+
\varphi(t-t_0^*)\,g(\mathbf z_0^*,t_0^*,t,A)
-
\varphi(t-t_0)\,g(\mathbf z_0,t_0,t,A).
\end{align}
Taking norms and applying the reverse triangle inequality yields
\begin{equation}
\label{eq:delta_ineq_clean}
\|\delta(t)\|
\ge
\Big\|
\varphi(t-t_0^*)\,g(\mathbf z_0^*,t_0^*,t,A)
-
\varphi(t-t_0)\,g(\mathbf z_0,t_0,t,A)
\Big\|
-
\|\mathbf z_0^*-\mathbf z_0\|.
\end{equation}
Denote $\Delta_{\mathrm{in}}:=\|\mathbf z_0^*-\mathbf z_0\|$.

We now lower-bound the residual term. Recall that
\[
g(\mathbf z(t_0),t_0,t,A)
=
\mathrm{MLP}(\mathbf z(t_0),t_0,t)
\odot
\mathrm{GCN}(\mathbf z(t_0),t_0,t,A),
\]
and assume the linear form
\(
\mathrm{GCN}(\mathbf z(t_0),t_0,t,A)=\mathcal A\,\mathbf z(t_0)\,W
\),
where $\mathcal A$ is the normalized adjacency matrix. All linear layers in both the MLP and the GCN are spectrally normalized, so that the MLP defines a Lipschitz-bounded element-wise modulation.

Then, for $t\ge t_0^*$, we define
\begin{equation}
\eta(t)
:=
\Big\|
\varphi(t-t_0^*)\,g(\mathbf z_0^*,t_0^*,t,A)
-
\varphi(t-t_0)\,g(\mathbf z_0,t_0,t,A)
\Big\|.
\end{equation}
By the spectral properties of the spectrally normalized MLP and the linear GCN,
$\eta(t)$ admits the lower bound
\begin{equation}
\eta(t)
\;\ge\;
\varphi(t-t_0^*)\,
\sigma_{\min}(\mathcal A)\,
\sigma_{\min}(W)\,
\|\mathbf z_0^*-\mathbf z_0\|.
\end{equation}

Substituting into Eq.~\eqref{eq:delta_ineq_clean} gives, for all $t\ge t_0^*$,
\begin{equation}
\|\delta(t)\|
\ge
\eta(t)-\Delta_{\mathrm{in}}
\ge
\Big(\sigma_{\min}(\mathcal A)\sigma_{\min}(W)-1\Big)\Delta_{\mathrm{in}}.
\end{equation}

Summing over discrete trajectory points $\{t_i\}_{i=k_0^*}^{L}$ yields
\begin{equation}
\sum_{i=k_0^*}^{L}\|\mathbf z^*(t_i)-\mathbf z(t_i)\|
\ge
\sum_{i=k_0^*}^{L}\bigl(\eta(t_i)-\Delta_{\mathrm{in}}\bigr)
\ge
(L-k_0^*+1)\bigl(\eta-\Delta_{\mathrm{in}}\bigr),
\end{equation}
where we define $\eta:=\sigma_{\min}(\mathcal A)\sigma_{\min}(W)\Delta_{\mathrm{in}}$.
Since the left-hand side is nonnegative, we finally obtain
\begin{equation}
\sum_{i=k_0^*}^{L}\|\mathbf z^*(t_i)-\mathbf z(t_i)\|
\ge
\max\!\Bigl\{0,\,(L-k_0^*+1)\bigl(\eta-\Delta_{\mathrm{in}}\bigr)\Bigr\},
\end{equation}
which completes the proof.
\end{proof}

\section{Algorithm}
We present the complete training procedure of GSNF in
Algorithm~\ref{alg:gsnf_training}, which summarizes graph inference,
initial state inference, forward trajectory generation, and trajectory-level regularization via ITG and RTG.

\begin{algorithm}[H]
   \caption{Training Procedure for GSNF}
   \label{alg:gsnf_training}
\begin{algorithmic}[1]
   \STATE {\bfseries Input:} Dataset $\mathcal{X}$; Labels $\mathcal{Y}$; Hyperparameters $\alpha, \beta, \gamma$; Re-initialization time $t_0^*$.
   \STATE {\bfseries Output:} Optimized parameters
   \WHILE{not converged}
       \STATE Sample batch $(X, y)$ from $\mathcal{X}$.
       \STATE \textit{// Stage 1. Interaction Graph Inference}:
       \STATE Compute segment-level adjacencies $A^{(s)}$ via self-attention;
       \STATE Aggregate global posterior $q_{\phi}(A|X) \leftarrow \sum w_s q_{\phi}(A^{(s)}|X^{(s)})$ (Eq. \ref{eq:posterior_A});
       \STATE Sample interaction graph $A \sim q_{\phi}(A|X)$;
       
       \STATE \textit{// Stage 2. Initial Latent State Inference}:
       \STATE Back-propagate observed states to $t_0$ via Inverse GSNF: $z_0^k \leftarrow F^{-1}(z(t_k), \dots)$;
       \STATE Aggregate posterior $q_{\phi}(z_0|X)$ and sample $z_0$ (Eq. \ref{eq:posterior_z});

       \STATE \textit{// Stage 3. Generation \& Classification}:
       \STATE Evolve trajectory $z(t) \leftarrow F(z_0, t_0, t, A)$ (Eq. \ref{eq: z-forward});
       \STATE Compute prediction $\hat{y}$ and losses $\mathcal{L}_{VAE}, \mathcal{L}_{CE}$;

       \STATE \textit{// Stage 4. Trajectory-Level  Self-Supervised Regularization}:
       \STATE \textbf{ITG:} Re-initialize at $t_0^*$ to get $z^*(t)$
       \STATE Determine separation margin $\delta$:
       \STATE \quad Option 1: Set $\delta$ (Manully Tuned Hyperparameter);
       \STATE \quad Option 2: Compute $\delta \leftarrow \delta_{lb}$ (Theoretical Lower Bound Eq.\ref{eq:lower_bound});
       \STATE Compute $\mathcal{L}_{ITG}$ using $\delta$ (Eq. \ref{eq:ITG});
       \STATE \textbf{RTG:} Reverse flow from $z(t_L)$ to get $\tilde{z}(t)$. Compute $\mathcal{L}_{RTG}$ (Eq. \ref{eq:RTG});
       
       \STATE \textit{// Stage 5. Update:}
       \STATE Minimize total loss $\mathcal{L}$ (Eq. \ref{eq:loss}).
   \ENDWHILE
\end{algorithmic}
\end{algorithm}

Among the components in Algorithm~\ref{alg:gsnf_training},
the ITG regularization step (Stage~4) relies on a separation margin~$\delta$,
whose choice is critical for effective trajectory separation.
Table~\ref{tab:delta} compares the manually tuned values of~$\delta$
used in our experiments with the corresponding theoretical lower bounds~$\delta_{lb}$
derived in Eq.~\ref{eq:lower_bound}.

\begin{table}[H] 
    \centering
    \begin{threeparttable}
        \small
        \setlength{\tabcolsep}{3.5pt}
        \renewcommand{\arraystretch}{1.15}
        \begin{tabular}{l c c c c c c}
            \toprule
            \textbf{ $\delta$($\times 10^{-6}$)}
            & \multicolumn{1}{c}{Physionet12}
            & \multicolumn{1}{c}{P12}
            & \multicolumn{1}{c}{P19}
            & \multicolumn{1}{c}{MIMIC-IV}
            & \multicolumn{1}{c}{eICU}\\
            \midrule
            Hyperparameter  & 1.00 &  10.00&  10.00& 0.10 &10.00\\
            \midrule
            Lower Bound     & 3.51 &  22.54& 15.76 & 0.47 &29.82 \\
            \bottomrule
        \end{tabular}
    \end{threeparttable}
    % --- 修正点：先写 Caption，再写 Label ---
    \caption{Comparison of manually selected separation margins ($\delta$) and the calculated theoretical lower bound ($\delta_{lb}$).} 
    \label{tab:delta} 
\end{table}

% \section{Degeneration to Neural Flow}
% We demonstrate that GSNF generalizes ResNet Flow ~\cite{bilovs2021neural}. Recall the GSNF dynamics defined in Eq. \ref{eq:gsnf}:
% \begin{align*}
% F(\mathbf{z}(t), t_0, t, A) &= \mathbf{z}(t) + \varphi(t)\cdot g(\mathbf{z}(t), t_0, t, A), \\
% g(\mathbf{z}(t), t_0, t, A) &= \mathrm{MLP}(\mathbf{z}(t), t_0, t) \odot \mathrm{GCN}(\mathbf{z}(t), t_0, t,  A),
% \label{eq:gsnf}
% \end{align*}

% In the non-interacting case where $A$ is the identity matrix $I$, the graph convolution operation, typically defined as $h_i' = \sigma(\sum_{j \in \mathcal{N}(i)} c_{ij} W h_j)$, simplifies significantly. Since the neighborhood $\mathcal{N}(i)$ contains only the node itself, the aggregation collapses to a simple node-wise projection: $\text{GCN}(z, I) = \sigma(Wz)$.

% Consequently, the interaction term reduces to a component-wise function $\tilde{g}(\mathbf{z}(t), t_0,t) = \mathrm{MLP}(\mathbf{z},t_0,t) \odot \sigma(W\mathbf{z}(t))$, which acts independently on each variable. The dynamics thus become $F(\mathbf{z}(t), t_0,t) = \mathbf{z}(t) + \varphi(t)\tilde{g}(\mathbf{z}, t_0, t)$, exactly recovering the ResNet Flow formulation. The spectral normalization applied in GSNF ensures $\text{Lip}(\tilde{g}) < 1$, guaranteeing invertibility as in the original method.

\section{Comprehensive Experiments}
%This section presents further experimental results, including performance analysis, grouped bar plots, detailed ablation studies and evaluations on synthetic datasets.

\subsection{Memory Usage and Training Time}
\label{app:performance_analysis}
Table ~\ref{tab:efficiency_comparison} presents the classification performance and computational efficiency of the proposed GSNF($\delta_{lb}$) variant compared to state-of-the-art baselines on the PhysioNet12 dataset. In terms of training speed, GSNF requires 9 seconds per epoch, making it the fastest among the top-performing models. Compared to high-accuracy baselines like DualDynamics (37s/epoch) and FlowPath (17s/epoch), GSNF achieves a speedup of approximately 2$\times$ to 4$\times$. This efficiency is attributed to the one-step flow architecture, which avoids the iterative integration required by ODE-based solvers. Regarding memory usage, GSNF consumes 6349 MB, which is comparable to other graph-based approaches.

\begin{table}[H] % 如果需要跨双栏请改为 table*
\centering
\begin{threeparttable}
\small
\setlength{\tabcolsep}{8pt} % 稍微增加列间距让表格更美观
\renewcommand{\arraystretch}{1.15}
\begin{tabular}{l c c c c}
\toprule
Method & AUROC & AUPRC & Time (s) & Memory (MB) \\
\midrule
GRU-D           & 79.1 & 42.7 & 25 & \underline{2456} \\
ODE-RNN         & 80.8 & 33.7 & 45 & 3407 \\
NeuralFlow      & 80.9 & 51.5 & 30 & 4999 \\
IVP-VAE         & 81.1 & 46.2 & \textbf{3}  & 5363 \\
DualDynamics    & 86.1 & 55.3 & 37 & 6743 \\
FlowPath        & 85.3 & 55.3 & 17 & 7356 \\
\midrule
RainDrop        & 81.2 & 37.3 & 28 & 3200 \\
GraphNeuralFlow & 84.5 & 53.7 & 35 & 6056 \\
Hi-Patch        & \underline{86.4} & \underline{56.5} & 24 & 6577 \\
\midrule
mTAN            & 85.8 & 50.4 & 12 & \textbf{1857} \\
Warpformer      & 83.4 & 43.5 & 40 & 4532 \\
TimeCHEAT       & 84.5 & 46.3 & 35 & 6859 \\
ViTST           & 81.3 & 37.4 & 59 & 8982 \\
\midrule
\textbf{GSNF (Ours)} & \textbf{86.7} & \textbf{56.9} & \underline{9} & 6349 \\
\bottomrule
\end{tabular}
\end{threeparttable}
\caption{Performance and efficiency comparison on PhysioNet12. \textbf{Bold} and \underline{underlined} mark the best and second-best results, respectively.}
\label{tab:efficiency_comparison}
\end{table}

\subsection{Main Result}

Fig. \ref{fig:mainResult} complements the main text by visualizing the performance variability across five independent runs. The error bars, representing standard deviation, indicate that GSNF maintains consistently low variance across all datasets, demonstrating superior stability compared to baselines like Raindrop or FlowPath which exhibit higher volatility in certain metrics. Furthermore, the theoretical variant GSNF($\delta_{lb}$) shows comparable stability to the manually tuned GSNF($\delta$), confirming that using the calculated lower bound yields robust results without compromising training stability.

\begin{figure}
\centering
\includegraphics[width=1\linewidth,]{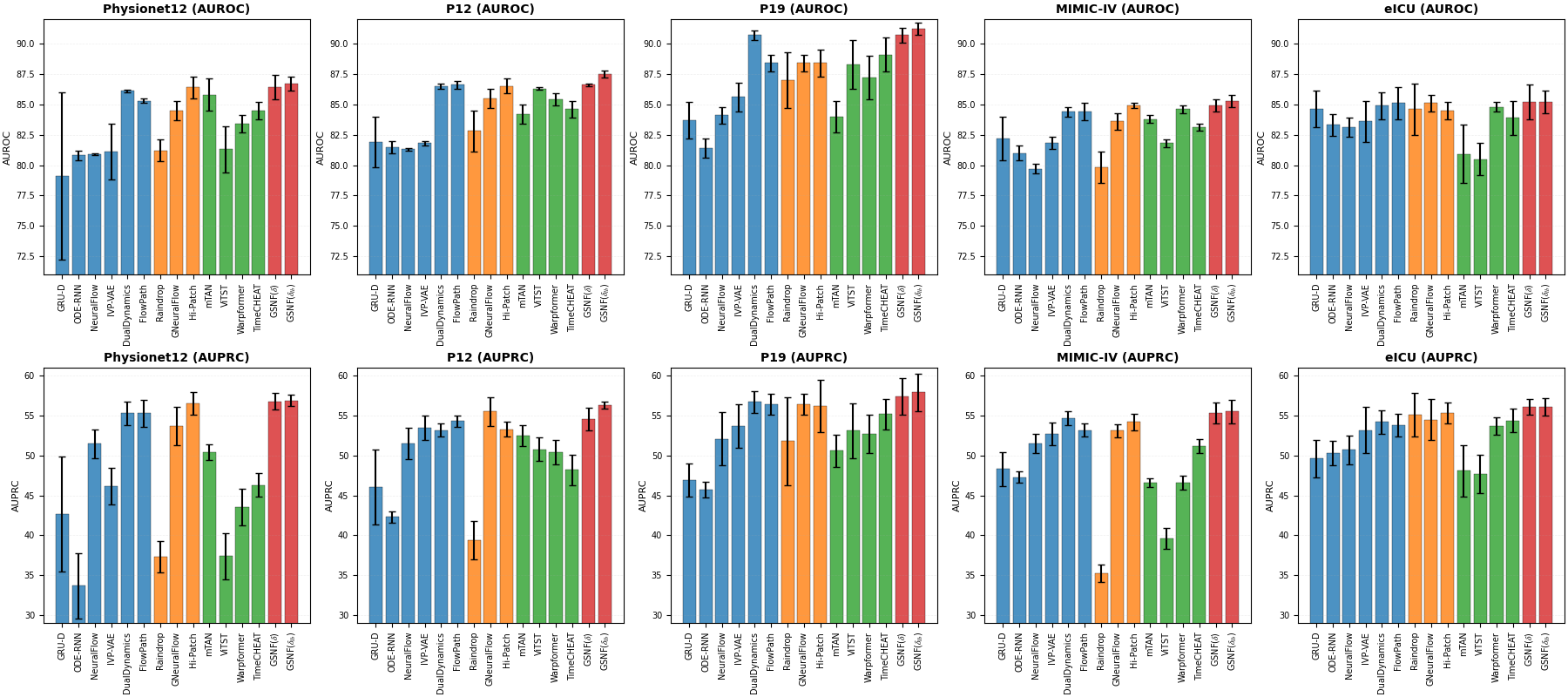}
\caption{Main Result}
\label{fig:mainResult}
\end{figure}

\section{Experimental Protocol}
\label{app:experiment}

We conduct comparative experiments using five-fold cross-validation to evaluate GSNF against several representative methods, with AUROC and AUPRC as the primary metrics,

The training workflow of entire model is summarized in Algorithm~\ref{alg:gsnf_training}. For testing, the model bypasses loss calculation and directly outputs the predicted labels via a forward pass.
All five datasets are used for classification experiments. Each dataset is randomly split into 80\% for training, 10\% for validation, and 10\% for testing. Following previous works~\cite{rubanova2019latent,shukla2021multi,zhang2021graph}, we repeat each experiment five times using different random seeds to split the datasets and initialize model parameters. For classification experiments, we focus on predicting in-hospital mortality using the first 48 hours of data. Due to class imbalance in these datasets, we assess classification performance using the area under the ROC curve (AUROC) and the area under the precision-recall curve (AUPRC). All models were tested in the same computing environment. The details are as follows:
\setlist[itemize,1]{leftmargin=1em,itemsep=0pt}
\begin{itemize}
\item Operating System: Ubuntu 22.04.1 LTS
\item CPU: Intel(R) Xeon(R) Gold 6330 CPU @ 2.00GHz
\item GPU: NVIDIA GeForce RTX 3090 with 24 GB of memory
\end{itemize}
%In the following, we provide a detailed description of the hyperparameter settings used in our experiments.

\subsection{Hyperparameters}
\label{app:hyperparams}
For reproducibility, we summarize the hyperparameter settings shared across all experiments,
together with the GSNF-specific configurations, in Table~\ref{tab:hparams_all}.

\begin{table}[H]
\centering
\small
\begin{tabular}{l c c}
\toprule
\textbf{Hyperparameter} & \textbf{Value} & \textbf{Scope} \\
\midrule
Optimizer & Adam & All \\
Weight decay & $1\times10^{-4}$ & All \\
Batch size & 50 & All \\
Learning rate (LR) & $1\times10^{-3}$ & All \\
LR scheduler step & 20 & All \\
LR decay factor & 0.5 & All \\
\midrule
Number of GSNF layers & 2 & GSNF \\
Latent dimension & Number of sensors & GSNF \\
Hidden layers & 3 & GSNF \\
Hidden dimension & 128 & GSNF \\
Cross-entropy weight $\alpha$ & 1000 & GSNF \\
ITG weight $\beta$ & 0.1 & GSNF \\
RTG weight $\gamma$ & 0.1 & GSNF \\
\bottomrule
\end{tabular}
\caption{Hyperparameter settings for all experiments and GSNF.}
\label{tab:hparams_all}
\end{table}

% \subsubsection{All experiments.}
% These are the shared hyperparameters applied across all tasks.
% \begin{itemize}
% \item Optimizer: Adam
% \item Weight decay: 1e-4
% \item Batch size: 50
% \item Learning rate: 1e-3
% \item Learning rate scheduler step: 20
% \item Learning rate decay: 0.5
% \end{itemize}
% \subsubsection{Detailed settings of GSNF.}
% Specific hyperparameters used to configure GSNF for optimal performance.
% \begin{itemize}
% \item Layers of Graph-Structured Neural Flow: 2
% \item Latent state dimension: number of sensors
% \item Hidden layers: 3
% \item Hidden dimension: 128
% \item Weight $\alpha$ for cross-entropy: 1000
% \item Weight $\beta$ for ITG: 0.1
% \item Weight $\gamma$ for RTG: 0.1
% \end{itemize}

\subsection{Datasets  and Preprocessing}
% \subsection{Synthetic data}
% We generate four synthetic systems with the graph size varying from 3 to 30. These systems follow the graph-based equation (4) or solution (5). They are named “Sink,” “Triangle,” “Sawtooth,” and “Square,” following those defined in [4]; but we add a graph to make the system SEM-like.

Our model was evaluated on five representative medical datasets featuring irregularly sampled time series. For consistency, the preprocessing procedures were adopted following the respective referenced works. Detailed information on each dataset and its preprocessing steps is provided below, and the key statistics of the processed datasets are summarized in Table \ref{tab:dataset_information}.

\begin{table}[H]
\centering
\label{tab:main2}
\begin{threeparttable}
\small
\setlength{\tabcolsep}{3pt}
\renewcommand{\arraystretch}{1.15}
% \resizebox{0.9\columnwidth}{!}{
\begin{tabular}{l c c c c c}
\toprule
& \multicolumn{1}{c}{Physionet12}
& \multicolumn{1}{c}{P12}
& \multicolumn{1}{c}{P19}
& \multicolumn{1}{c}{MIMIC-IV}
& \multicolumn{1}{c}{eICU} \\
\midrule
\#Samples      & 3,989 & 11,988  & 38,803 & 26,070 & 12,312\\
\#Variables   & 37 & 36 & 39  & 96 & 14 \\
Missing ratio (\%)   & 84.34 & 88.4 &94.9 & 97.95 & 65.25\\
Positive rate (\%) & 13.89 & 7 & 4 & 13.39 & 17.61\\
\bottomrule
\end{tabular}
% }
\end{threeparttable}
\caption{Key information of the five datasets.} 
%after preprocessing: Number of admissions used, number of selected variables, overall percentage of missing values, positive rate for mortality.}
\label{tab:dataset_information}
\end{table}
%we evaluated our model on four datasets in medical field where irregular sampled time series is most widely used, namely P12~\cite{goldberger2000physiobank},PhysioNet12~\cite{silva2012predicting},eICU~\cite{pollard2018eicu} and MIMIC-IV~\cite{johnson2020mimic}. 

The \textbf{PhysioNet 2012} dataset~\cite{silva2012predicting}was released for the PhysioNet/Computing in Cardiology Challenge 2012, aiming to predict in-hospital mortality which is a reduced
version of P12 considered by prior work. It contains patient information from ICU admissions, including vital signs, lab tests, and demographic data. In our experiments, we follow Neural Flow~\cite{bilovs2021neural} and utilize the 4,000 admissions from the challenge’s training set, focusing on 37 features recorded within the first 48 hours of each patient’s stay.

The \textbf{P12} dataset~\cite{goldberger2000physiobank}  comprises data from 11,988 patients, including 36 sensor variables and a binary label indicating survival during hospitalization. We used the processed data provided by Raindrop~\cite{zhang2021graph}.

The \textbf{P19} dataset~\cite{reyna2020early} was released for the PhysioNet/Computing in Cardiology Challenge 2019, aiming to predict the onset of sepsis. It contains patient information from ICU stays, comprising static demographics and sparse time-dependent physiological measurements. In our experiments, we utilize 38803 variable-length time series, focusing on 39 features (5 static and 34 time-dependent) recorded within the first 72 hours of each patient’s stay for the binary classification of sepsis development.

The \textbf{MIMIC-IV} dataset~\cite{johnson2020mimic}  is a multivariate time series dataset composed of sparse and irregularly sampled physiological data collected at the Beth Israel Deaconess Medical Center between 2008 and 2019. Following a preprocessing approach similar to that of Neural Flow~\cite{bilovs2021neural}, we extract 96 features — including patient intake/output, lab results, and medication prescriptions — from the first 48 hours post-ICU admission. A total of 26,070 patient stays are retained for use in classification tasks.

The \textbf{eICU} Collaborative Research Database~\cite{pollard2018eicu} contains data from patients admitted to ICUs across 208 hospitals in the United States between 2014 and 2015. Following the preprocessing steps outlined by IVP-VAE~\cite{xiao2024ivp}, we extract 14 features within the initial 48 hours post-ICU admission from a total of 12,312 patient stays.

\subsection{Baselines}
We compare our model against several baselines for the classification of multivariate irregular time-series.
\setlist[itemize,1]{leftmargin=1em,itemsep=0pt}

\begin{itemize}
\item \textbf{Continuous-time model}:
\begin{itemize}
\item \textbf{GRU-D}~\cite{che2018recurrent} incorporates missing patterns using GRU combined with a learnable decay mechanism on both the input sequence and hidden states.
\item \textbf{ODE-RNN}~\cite{rubanova2019latent} uses an ODE-RNN encoder and Neural ODE decoder in a VAE architecture.
\item \textbf{NeuralFlow}~\cite{bilovs2021neural} model the solution curves directly, with a neural network, instead of specifying the derivative. 
\item \textbf{IVP-VAE}~\cite{xiao2024ivp}  models irregular time series using a single invertible IVP-based continuous process, eliminating recurrent components and enabling parallel state evolution.
\item \textbf{DualDynamics}~\cite{oh2025dualdynamics} combines NDE-based method and Neural Flowbased method enhances expressive power.
\item \textbf{FlowPath}~\cite{oh2025flowpath} employs an invertible neural flow to learn the geometry of the control path, leveraging invertibility constraints to construct a continuous and data-adaptive manifold for robust modeling of sparse and irregularly-sampled time series.
\end{itemize}

\item \textbf{Graph-based models}:
\begin{itemize}
\item \textbf{Raindrop}~\cite{zhang2021graph} represents dependencies among multivariates with a graph whose connectivity is learned from time series.
\item \textbf{GNeuralFlow}~\cite{mercatali2024graph} using a directed acyclic graph to model the conditional dependencies of the system components and learning this graph in tandem with neural flow.
\item \textbf{Hi-Patch}~\cite{luohi} integrates intra-patch graphs for densely sampled local modeling and inter-patch graphs for global multi-scale analysis, leveraging a hierarchical architecture to handle variables with distinct origin scales in Irregular Multivariate Time Series.
\end{itemize}

\item \textbf{Other strong baselines}:
\begin{itemize}
\item \textbf{mTAN}~\cite{shukla2021multi} leverages an attention mechanism to learn temporal similarity and time embeddings.
\item \textbf{ViTST}~\cite{li2023time} transforms irregularly sampled time series into line graph images and applies pre-trained vision transformers for classification.
\item \textbf{Warpformer}~\cite{zhang2023warpformer} addresses intra-series irregularity and inter-series discrepancy in irregular time series by introducing a warping-based architecture with specialized input encoding.
\item \textbf{TimeCHEAT}~\cite{liu2025timecheat} combine channel-dependent modeling at the local (sub-series) level and channel-independent attention at the global level, leveraging bipartite graph-based embedding and Transformer architecture.
\end{itemize}
\end{itemize}

%% file: LaTeX/example_paper.bib
@article{kingma2013auto,
  title={Auto-encoding variational bayes},
  author={Kingma, Diederik P and Welling, Max},
  journal={arXiv preprint arXiv:1312.6114},
  year={2013}
}

@article{chen2018neural,
  title={Neural ordinary differential equations},
  author={Chen, Ricky TQ and Rubanova, Yulia and Bettencourt, Jesse and Duvenaud, David K},
  journal={Advances in neural information processing systems},
  volume={31},
  year={2018}
}

@article{poli2019graph,
  title={Graph neural ordinary differential equations},
  author={Poli, Michael and Massaroli, Stefano and Park, Junyoung and Yamashita, Atsushi and Asama, Hajime and Park, Jinkyoo},
  journal={Proceedings of the AAAI Conference on Artificial Intelligence},
  year={2019}
}

@inproceedings{huang2023generalizing,
  title={Generalizing graph ode for learning complex system dynamics across environments},
  author={Huang, Zijie and Sun, Yizhou and Wang, Wei},
  booktitle={Proceedings of the 29th ACM SIGKDD Conference on Knowledge Discovery and Data Mining},
  pages={798--809},
  year={2023}
}

@article{huang2020learning,
  title={Learning continuous system dynamics from irregularly-sampled partial observations},
  author={Huang, Zijie and Sun, Yizhou and Wang, Wei},
  journal={Advances in Neural Information Processing Systems},
  volume={33},
  pages={16177--16187},
  year={2020}
}

@inproceedings{huang2021coupled,
  title={Coupled graph ode for learning interacting system dynamics},
  author={Huang, Zijie and Sun, Yizhou and Wang, Wei},
  booktitle={Proceedings of the 27th ACM SIGKDD conference on knowledge discovery \& data mining},
  pages={705--715},
  year={2021}
}

@article{rubanova2019latent,
  title={Latent ordinary differential equations for irregularly-sampled time series},
  author={Rubanova, Yulia and Chen, Ricky TQ and Duvenaud, David K},
  journal={Advances in neural information processing systems},
  volume={32},
  year={2019}
}

@article{de2019gru,
  title={GRU-ODE-Bayes: Continuous modeling of sporadically-observed time series},
  author={De Brouwer, Edward and Simm, Jaak and Arany, Adam and Moreau, Yves},
  journal={Advances in neural information processing systems},
  volume={32},
  year={2019}
}

@article{chen2023contiformer,
  title={Contiformer: Continuous-time transformer for irregular time series modeling},
  author={Chen, Yuqi and Ren, Kan and Wang, Yansen and Fang, Yuchen and Sun, Weiwei and Li, Dongsheng},
  journal={Advances in Neural Information Processing Systems},
  volume={36},
  pages={47143--47175},
  year={2023}
}

@article{kidger2020neural,
  title={Neural controlled differential equations for irregular time series},
  author={Kidger, Patrick and Morrill, James and Foster, James and Lyons, Terry},
  journal={Advances in neural information processing systems},
  volume={33},
  pages={6696--6707},
  year={2020}
}

@article{kidger2021efficient,
  title={Efficient and accurate gradients for neural sdes},
  author={Kidger, Patrick and Foster, James and Li, Xuechen Chen and Lyons, Terry},
  journal={Advances in Neural Information Processing Systems},
  volume={34},
  pages={18747--18761},
  year={2021}
}

@article{bilovs2021neural,
  title={Neural flows: Efficient alternative to neural ODEs},
  author={Bilo{\v{s}}, Marin and Sommer, Johanna and Rangapuram, Syama Sundar and Januschowski, Tim and G{\"u}nnemann, Stephan},
  journal={Advances in neural information processing systems},
  volume={34},
  pages={21325--21337},
  year={2021}
}

@inproceedings{xiao2024ivp,
  title={IVP-VAE: modeling EHR time series with initial value problem solvers},
  author={Xiao, Jingge and Basso, Leonie and Nejdl, Wolfgang and Ganguly, Niloy and Sikdar, Sandipan},
  booktitle={Proceedings of the AAAI Conference on Artificial Intelligence},
  volume={38},
  number={14},
  pages={16023--16031},
  year={2024}
}

@article{oh2025comprehensive,
  title={Comprehensive review of neural differential equations for time series analysis},
  author={Oh, YongKyung and Kam, Seungsu and Lee, Jonghun and Lim, Dong-Young and Kim, Sungil and Bui, Alex},
  journal={the Thirty-Fourth International Joint Conference on Artificial Intelligence},
  year={2025}
}

@article{mercatali2024graph,
  title={Graph neural flows for unveiling systemic interactions among irregularly sampled time series},
  author={Mercatali, Giangiacomo and Freitas, Andre and Chen, Jie},
  journal={Advances in Neural Information Processing Systems},
  volume={37},
  pages={57183--57206},
  year={2024}
}

@article{goldberger2000physiobank,
  title={PhysioBank, PhysioToolkit, and PhysioNet: components of a new research resource for complex physiologic signals},
  author={Goldberger, Ary L and Amaral, Luis AN and Glass, Leon and Hausdorff, Jeffrey M and Ivanov, Plamen Ch and Mark, Roger G and Mietus, Joseph E and Moody, George B and Peng, Chung-Kang and Stanley, H Eugene},
  journal={circulation},
  volume={101},
  number={23},
  pages={e215--e220},
  year={2000},
  publisher={Lippincott Williams \& Wilkins}
}

@inproceedings{silva2012predicting,
  title={Predicting in-hospital mortality of icu patients: The physionet/computing in cardiology challenge 2012},
  author={Silva, Ikaro and Moody, George and Scott, Daniel J and Celi, Leo A and Mark, Roger G},
  booktitle={2012 computing in cardiology},
  pages={245--248},
  year={2012},
  organization={IEEE}
}

@article{pollard2018eicu,
  title={The eICU Collaborative Research Database, a freely available multi-center database for critical care research},
  author={Pollard, Tom J and Johnson, Alistair EW and Raffa, Jesse D and Celi, Leo A and Mark, Roger G and Badawi, Omar},
  journal={Scientific data},
  volume={5},
  number={1},
  pages={1--13},
  year={2018},
  publisher={Nature Publishing Group}
}

@article{johnson2020mimic,
  title={Mimic-iv},
  author={Johnson, Alistair and Bulgarelli, Lucas and Pollard, Tom and Horng, Steven and Celi, Leo Anthony and Mark, Roger},
  journal={PhysioNet. Available online at: https://physionet. org/content/mimiciv/1.0/(accessed August 23, 2021)},
  pages={49--55},
  year={2020}
}

@article{zhang2021graph,
  title={Graph-guided network for irregularly sampled multivariate time series},
  author={Zhang, Xiang and Zeman, Marko and Tsiligkaridis, Theodoros and Zitnik, Marinka},
  journal={International Conference on Learning Representations},
  year={2022}
}

@article{che2018recurrent,
  title={Recurrent neural networks for multivariate time series with missing values},
  author={Che, Zhengping and Purushotham, Sanjay and Cho, Kyunghyun and Sontag, David and Liu, Yan},
  journal={Scientific reports},
  volume={8},
  number={1},
  pages={6085},
  year={2018},
  publisher={Nature Publishing Group UK London}
}

@inproceedings{oh2025dualdynamics,
  title={Dualdynamics: Synergizing implicit and explicit methods for robust irregular time series analysis},
  author={Oh, YongKyung and Lim, Dong-Young and Kim, Sungil},
  booktitle={Proceedings of the AAAI Conference on Artificial Intelligence},
  volume={39},
  number={18},
  pages={19730--19739},
  year={2025}
}

@article{shukla2021multi,
  title={Multi-time attention networks for irregularly sampled time series},
  author={Shukla, Satya Narayan and Marlin, Benjamin M},
  journal={International Conference on Learning Representations},
  year={2021}
}

@article{li2023time,
  title={Time series as images: Vision transformer for irregularly sampled time series},
  author={Li, Zekun and Li, Shiyang and Yan, Xifeng},
  journal={Advances in Neural Information Processing Systems},
  volume={36},
  pages={49187--49204},
  year={2023}
}

@inproceedings{zhang2023warpformer,
  title={Warpformer: A multi-scale modeling approach for irregular clinical time series},
  author={Zhang, Jiawen and Zheng, Shun and Cao, Wei and Bian, Jiang and Li, Jia},
  booktitle={Proceedings of the 29th ACM SIGKDD Conference on Knowledge Discovery and Data Mining},
  pages={3273--3285},
  year={2023}
}

@inproceedings{liu2025timecheat,
  title={Timecheat: A channel harmony strategy for irregularly sampled multivariate time series analysis},
  author={Liu, Jiexi and Cao, Meng and Chen, Songcan},
  booktitle={Proceedings of the AAAI Conference on Artificial Intelligence},
  volume={39},
  number={18},
  pages={18861--18869},
  year={2025}
}

@inproceedings{zhou2023detecting,
  title={Detecting multivariate time series anomalies with zero known label},
  author={Zhou, Qihang and Chen, Jiming and Liu, Haoyu and He, Shibo and Meng, Wenchao},
  booktitle={Proceedings of the AAAI Conference on Artificial Intelligence},
  volume={37},
  number={4},
  pages={4963--4971},
  year={2023}
}

@inproceedings{luohi,
  title={Hi-Patch: Hierarchical Patch GNN for Irregular Multivariate Time Series},
  author={Luo, Yicheng and Zhang, Bowen and Liu, Zhen and Ma, Qianli},
  booktitle={Forty-second International Conference on Machine Learning},
  year={2025}
}

@article{oh2025flowpath,
  title={FlowPath: Learning Data-Driven Manifolds with Invertible Flows for Robust Irregularly-sampled Time Series Classification},
  author={Oh, YongKyung and Lim, Dong-Young and Kim, Sungil},
  journal={Proceedings of the AAAI Conference on Artificial
Intelligence},
  year={2026}
}

@article{reyna2020early,
  title={Early prediction of sepsis from clinical data: the PhysioNet/Computing in Cardiology Challenge 2019},
  author={Reyna, Matthew A and Josef, Christopher S and Jeter, Russell and Shashikumar, Supreeth P and Westover, M Brandon and Nemati, Shamim and Clifford, Gari D and Sharma, Ashish},
  journal={Critical care medicine},
  volume={48},
  number={2},
  pages={210--217},
  year={2020},
  publisher={LWW}
}
